\documentclass[conference]{IEEEtran}

\usepackage{times}
\usepackage[numbers]{natbib}
\usepackage{amsmath,amssymb,amsthm}
\usepackage{graphicx}
\usepackage{booktabs}
\usepackage{multirow}
\usepackage{xcolor}
\usepackage{xspace}
\usepackage{subcaption}
\usepackage{fvextra}
\usepackage[bookmarks=true,hidelinks]{hyperref}
\usepackage[capitalise,nameinlink]{cleveref}
\usepackage{cuted}

\graphicspath{{fig/}}

\definecolor{promptframe}{gray}{0.85}
\DefineVerbatimEnvironment{promptbox}{Verbatim}{%
    fontsize=\footnotesize,%
    frame=single,%
    framesep=4pt,%
    rulecolor=\color{promptframe},%
    breaklines=true,%
    breakautoindent=false,%
    breaksymbolleft={},%
    breaksymbolright={},%
}

\newtheorem{theorem}{Theorem}

\newtheorem{proposition}[theorem]{Proposition}

\renewcommand{\thetable}{\arabic{table}}

\newcommand{\ourmethod}{\textsc{Texedo}}
\newcommand{\basegen}{FSQ-GPT}
\newcommand{\Rtext}{R_{\mathrm{text}}}
\newcommand{\Rdyn}{R_{\mathrm{dyn}}}

\definecolor{base-gray}{RGB}{120,120,120}
\definecolor{rdyn-blue}{RGB}{13,121,202}
\definecolor{rtext-coral}{RGB}{255,119,94}
\definecolor{rdyn}{HTML}{829AE3}
\definecolor{rtext}{HTML}{A3C078}
\definecolor{oracle-color}{HTML}{977AE4}
\definecolor{linrew-aqua}{RGB}{52,172,139}
\definecolor{ours-orange}{HTML}{F37021}
\definecolor{gt-dark}{RGB}{60,60,60}
\definecolor{random-gray}{RGB}{180,180,180}

\newcommand{\base}{\textcolor{base-gray}{\textbf{Base}}\xspace}
\newcommand{\randomsel}{\textcolor{random-gray}{\textbf{Random}}\xspace}
\newcommand{\rdynonly}{\textcolor{rdyn}{\textbf{$\Rdyn$-only}}\xspace}
\newcommand{\rtextonly}{\textcolor{rtext}{\textbf{$\Rtext$-only}}\xspace}

\newcommand{\oursc}{\textcolor{ours-orange}{\textbf{\textsc{Texedo}}}\xspace}
\newcommand{\oracle}{\textcolor{oracle-color}{\textbf{Oracle}}\xspace}

\begin{document}

\title{\ourmethod{}\raisebox{-1pt}{\includegraphics[height=2.0ex]{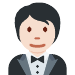}}: Test Time Scaling for Controller-aware Language-conditioned Humanoid Motion Generation}

\author{
    {Jianuo Cao}$^{\ast 1,2}$, {Yuxin Chen}$^{\ast 2}$, {Yuzhen Song}$^{ 2,3}$, {Masayoshi Tomizuka}$^{2}$, {Chenran Li}$^{2}$, {Thomas Tian}$^{2}$\\
    $^1$\textit{Nanjing University} \quad $^2$\textit{University of California, Berkeley} \quad $^3$\textit{Southern University of Science and Technology}
}

\maketitle

\begin{strip}
    \centering
    \vspace{-1.5cm}
    \includegraphics[width=1.0\textwidth]{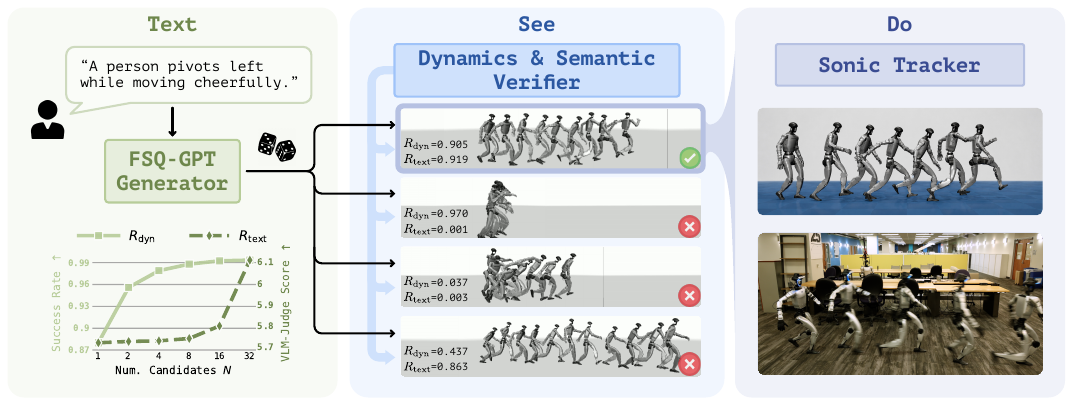}
    \captionof{figure}{\textbf{\ourmethod{} overview.} Given a language prompt, the \textsc{Text} stage samples multiple whole-body motions from a text-conditioned motion generator. The \textsc{See} stage evaluates each candidate with two grounded verifiers: a dynamic score that predicts controller executability and a semantic score that measures text-motion alignment. The \textsc{Do} stage selects a deployable motion by first filtering out dynamically infeasible candidates and then choosing the most semantically aligned one.}
    \label{fig:intro_teaser}
    % \vspace{+0.7em}
\end{strip}

\begin{abstract}
Text-conditioned motion generation has become a promising interface for programming humanoid robots, but current generators are often trained on human motion datasets retargeted to robot morphology.
While such data provides rich semantic  kinematic priors, it does not capture the nuances of the whole-body tracking controller, including balance, contact, actuation limits, and controller-specific failure modes.
Therefore, generated motions can be semantically plausible yet difficult or impossible for the robot to execute.  
We propose \ourmethod{}, a test-time scaling framework for humanoid motion generation that improves motion quality without requiring a stronger underlying generator.
Given a text prompt, our method samples multiple motions from the pre-trained text-conditioned generator and selects the best executable and task-aligned motion. 
The reward model combines a dynamic feasibility verifier, distilled from whole-body tracking rollouts to predict physical executability, with a semantic alignment verifier that measures text-motion alignment in a learned co-embedding space.
Our pipeline treats dynamic feasibility as a hard constraint and semantic alignment as the selection objective within the feasible set.
Across large-scale simulation studies and real-world deployment on a Unitree~G1, we show that our test-time scaling strategy consistently improves both tracking fidelity and text alignment, demonstrating that grounded verification is an effective path toward deployable language-guided humanoid motion generation. Please checkout our website for robot videos, code and data: \href{https://jianuocao.github.io/TEXEDO/}{\textcolor{orange}{Project Website}}.
\end{abstract}

% \begin{IEEEkeywords}
% Humanoid Robot, Motion Generation, Test-Time Scaling
% \end{IEEEkeywords}

\IEEEpeerreviewmaketitle

\section{Introduction}
\label{sec:intro}

Recent progress on language-controlled humanoids has been a story of scaling at the two ends of a single deployment pipeline.
On the \emph{motion generation} side, text-conditioned motion generation models have grown from small task-specific networks into large models that produce whole-body motions directly from a language prompt~\citep{motionGPT,t2mgpt,mdm,momask,motiondiffuse,rempe2026kimodo}. 
On the \emph{whole-body control} side, learning-based whole-body tracking controllers have evolved from task-specific controllers into multi-task controllers that can, in principle, execute essentially any kinematic reference on real hardware~\citep{peng2021amp,phc,beyondmimic,sonic}. 

Despite this progress, these two scaling trends remain only weakly coupled. Most motion-generation models learn from human motion corpora, often after re-targeting the kinematics to a robot skeleton~\citep{amass,humanml3d}, but their training objective does not expose them to the downstream controller that must realize the motion.
As a result, a generated motion may be semantically faithful % and kinematically plausible in isolation,
yet still fall outside the feasible set of the humanoid tracker due to balance constraints, contact timing, actuation limits, or controller-specific failure modes.
In such cases, the burden is shifted to the controller: it must attempt to realize a reference motion that was never generated with its capabilities in mind.
This leads to brittle and poorly specified behavior at deployment, where the executed motion may deviate from the intended semantics, lose balance, produce improper contacts, or saturate actuators.

In language modeling, an increasingly successful way to improve generation at test time is to sample multiple candidates and use a verifier to select the best one~\citep{lightman2023lets,snell2024scaling,openai_o1,deepseekr1}.
Recent work on vision-language-action policies suggests that a similar principle can improve robotic decision-making during deployment~\citep{robomonkey,wu2025foresight}.
However, importing this recipe to humanoid motion generation requires a more grounded notion of what ``best’’ means.
A good candidate motion must not only match the language prompt; it must also be executable by the particular whole-body controller on the particular humanoid platform.
Thus, the verifier cannot merely judge abstract motion quality or text-motion similarity.
It must be \emph{controller-aware}: evaluate whether a candidate trajectory lies within the balance, contact, and actuation budgets of the tracker that will actually execute it.

We propose \oursc{} (\textbf{Text-See-Do}), a test-time scaling pipeline that turns a frozen text-to-motion generator into a controller-aware motion generator    (\Cref{fig:intro_teaser}).
Given a language prompt, \ourmethod{} first samples multiple candidate whole-body trajectories (\textsc{Text}).
It then evaluates each candidate along two complementary axes (\textsc{See}): whether the motion is likely to be executable by the deployment-time tracker, and whether it remains semantically aligned with the language prompt.
The first score is distilled from rollouts of the tracker itself, making it grounded in the balance, contact, and actuation limits of the target humanoid.
The second score is learned from a contrastive text--motion embedding, allowing the selector to preserve the intended semantics of the prompt.
Because dynamic feasibility and semantic alignment are not interchangeable, \ourmethod{} composes them asymmetrically (\textsc{Do}): it first removes motions that are unlikely to be trackable, and then selects the most semantically aligned candidate from the feasible set.
In this way, \ourmethod{} converts additional inference-time samples into a single reference trajectory that is both language-aligned and grounded in the capabilities of the downstream controller, without modifying either the generator or the tracker.

% The contributions of this work are threefold: 1)We formulate test-time scaling for language-conditioned whole-body humanoid motion generation, where multiple candidate motions are sampled from a generator and selected according to both semantic alignment and downstream trackability. 2) We introduce two complementary \emph{grounded} verifiers that score a reference motion using signals unavailable to the generator at selection time: a Dynamic Verifier distilled from whole-body tracker rollouts to estimate whether the motion can be realized by the downstream tracker, and a Semantic Verifier trained directly on robot-skeleton motions to measure text-motion alignment in the robot motion space. 3) We show that semantic alignment and trackability induce competing selection criteria, and propose an asymmetric filter-then-rerank strategy that first enforces dynamic realizability and then reranks the remaining candidates by semantic alignment. Empirically, this strategy improves both tracking success and motion quality, transfers zero-shot to an independently trained generator Kimodo~\citep{rempe2026kimodo}, generalizes to compositional out-of-distribution prompts, and deploys on a physical Unitree~G1 humanoid.

The contributions of this work are threefold.
First, we formulate controller-aware, language-conditioned whole-body humanoid motion generation as a test-time scaling problem, where additional inference-time samples are used to improve the single reference motion ultimately commanded to the robot.
Second, we introduce two complementary grounded verifiers for controller-aware selection: a Dynamic Feasibility Verifier, distilled from rollouts of the deployment-time tracker to predict whether a reference motion can be realized on the target humanoid, and a Semantic Alignment Verifier, trained directly on robot-skeleton motions to measure text–motion alignment.
Third, we show that semantic alignment and dynamic feasibility define competing selection criteria, and propose an asymmetric filter-then-rerank strategy that treats feasibility as a constraint before selecting the most semantically aligned motion.
Empirically, \ourmethod{} improves both motion execution quality and task alignment, transfers zero-shot to unseen motion generators, generalizes to out-of-distribution prompts, and yields consistent gains over vanilla sampling without selection in real-world experiments on a physical Unitree~G1 humanoid.

\section{Problem Formulation}
\label{sec:formulation}
% \vspace{-0.2cm}

We formulate controller-aware whole-body motion generation as a \emph{test-time scaling} problem: instead of committing to the first motion sampled from a generator, we spend inference-time compute to sample multiple candidate motions and select the one most suitable for execution.
Formally, let $\ell$ denote a natural-language instruction and let $\mathbf{m}\in\mathbb{R}^{T\times D}$ denote a whole-body reference motion, where $T$ is the sequence length and $D$ is the dimension of the target humanoid’s root pose and joint configuration.
A text-to-motion generator $\mathcal{G}$ induces a distribution $\mathcal{G}(\cdot\mid \ell)$ over such motions.
At deployment, rather than executing a single sample from this distribution, we draw a candidate set and use a selector to choose a motion that is both dynamically feasible for the downstream tracker and semantically aligned with the instruction.
\section{Method}
\label{sec:method}
% \vspace{-0.2cm}

Given a language instruction $\ell$, \ourmethod{} follows a \textsc{Text--See--Do} runtime selection pipeline. \Cref{sec:text}  \textsc{(Text)}: A language-conditioned generator samples $N$ candidate motions $\{\mathbf{m}_i\}_{i=1}^{N}$ from $\ell$; \Cref{sec:see}  \textsc{(See)}: grounded verifiers score their dynamic feasibility and semantic alignment; and \Cref{sec:do}  \textsc{(Do)}: a filter-then-rerank rule selects one deployable motion. Both the generator and deployment-time tracker remain frozen.

\subsection{Language-Conditioned Motion Generator \textsc{(Text)}}
\label{sec:text}

The \textsc{Text} stage provides the candidate motions that the subsequent \textsc{See} and \textsc{Do} stages operate on. Given a language instruction $\ell$, we assume access to a language-conditioned motion generator $\mathcal{G}$ that can sample a set of candidate reference trajectories
$
    \mathcal{M}_N(\ell)=\{\mathbf{m}_i\}_{i=1}^{N},
    \mathbf{m}_i \sim \mathcal{G}(\cdot \mid \ell).
$
In our framework, $\mathcal{G}$ is treated as a black box: the verifiers consume only the decoded reference trajectories $\{\mathbf{m}_i\}$ and do not rely on the generator architecture.% tokenization scheme, likelihoods, or internal representations.

For the main experiments, we instantiate $\mathcal{G}$ with \basegen{}, a discrete autoregressive text-to-motion generator for whole-body humanoid trajectories.
\basegen{} first compresses continuous robot motions into discrete motion tokens using an FSQ tokenizer~\citep{fsq}, and then fine-tunes a Flan-T5-base model~\citep{flant5} to generate these tokens autoregressively from language instructions.
At inference time, sampled token sequences are decoded back into continuous whole-body reference trajectories, forming the candidate set $\mathcal{M}_N(\ell)$. We sample with temperature and top-$k$ truncation to control candidate diversity. Full training and sample details for \basegen{} are provided in Appendix~\ref{app:generator}.

\subsection{Dynamic Feasibility And Semantic Alignment Evaluation \textsc{(See)}}
\label{sec:see}

\begin{figure}[tbh]
  \centering
  \includegraphics[width=\linewidth]{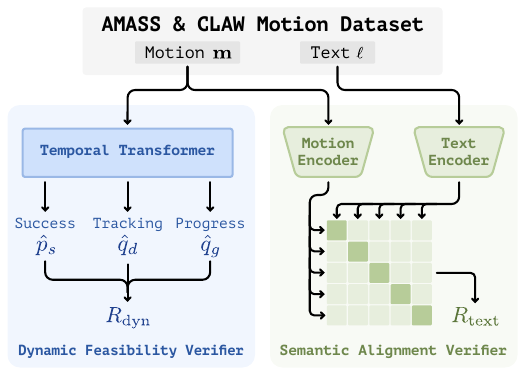}
  \caption{\textbf{Dual verifier design.} $\Rdyn$ estimates dynamic feasibility from the motion alone and $\Rtext$ measures text-motion alignment in a learned embedding space.}
  \label{fig:see_dual}
\end{figure}

What turns a pool of candidate motions into a single deployable trajectory is, ultimately, a definition of \emph{motion quality}. The \textsc{See} stage answers this question with two scalars assigned to every candidate: a dynamic score $\Rdyn(\mathbf{m})\!\in\![0,1]$ measuring whether the motion is dynamically feasible on the deployment-time tracker, and a semantic score $\Rtext(\ell,\mathbf{m})\!\in\![0,1]$ measuring whether it depicts what the prompt describes.
The design of both verifiers is illustrated in \Cref{fig:see_dual}.

\textbf{Dynamic feasibility verifier.}
A kinematically plausible motion may still fail under the deployment-time tracker because of balance, contact, or actuation constraints. However, evaluating each candidate by an actual roll-out at selection time is prohibitive. Thus, we proceed in two steps: % we first define a roll-out level oracle quality $Q^*$ that converts a single offline roll-out into a scalar measure of dynamic feasibility, and we then distil it into a fast motion-only surrogate that scores candidates without invoking the tracker at selection time.

\emph{Oracle quality of roll-outs.}~~For each candidate $\mathbf{m}$, an offline tracker roll-out produces three complementary signals: a binary success indicator $y_s\!\in\!\{0,1\}$, a normalized tracking quality score $q_d\!\in\![0,1]$, and a progress ratio $q_g\!\in\![0,1]$ measuring the percentage of reference completed before early termination. 
We adopt the early-termination criterion of BeyondMimic~\citep{beyondmimic} as the common failure definition for both $y_s$ and $q_g$, and details are reported to Appendix~\ref{app:verifier}. 

To jointly capture these signals, we collapse the three signals into a single oracle quality:
\begin{equation}
  Q^*(y_s, q_d, q_g)
  \;\triangleq\;
  y_s \cdot \frac{1 + \alpha\, q_d}{1 + \alpha}
  \;+\;
  (1 - y_s)\, \beta\, q_g\, q_d.
  % \qquad
  % \alpha = 0.4,\; \beta = 0.6 .
  \label{eq:qstar}
\end{equation}
% The two regimes encode different priorities: $\alpha$ controls how much tracking quality modulates the reward of a \emph{successful} rollout, while $\beta$ sets the partial credit assigned to a \emph{failed} rollout based on its progress and tracking quality. The choice $\beta<1/(1+\alpha)$ guarantees that every successful rollout scores strictly higher than every failed one, so $Q^*$ never trades a feasible candidate for an infeasible one with high partial credit (hyperparameter ablations in \cref{app:dyn_analysis}). 
Here, $\alpha$ controls how much motion-tracking quality modulates the reward for a \emph{successful} roll-out, while $\beta$ sets the partial-credit weight assigned to \emph{failed} roll-outs based on their progress and tracking quality. 
% (ablation of both See \cref{app:dyn_analysis}).
%The formulation of $Q^*$ is designed to jointly capture task completion, motion quality, and partial progress. 
In particular, we set $\beta < 1/(1+\alpha)$, so that $Q^*$ assigns successful roll-outs a higher score than any failed roll-outs, while retaining the graded information within the all success or all failure group. 
% We use $Q^*$ throughout the paper both as the supervision target for the surrogate below and as the reference metric reported in \cref{sec:dyn}.

\emph{Motion-only surrogate.}~~After obtaining the oracle quality labels, we train a lightweight temporal Transformer to predict $(\hat{p}_s,\hat{q}_d,\hat{q}_g)$ from $\mathbf{m}$, and feed the predictions back through \cref{eq:qstar}:
\begin{equation}
  \Rdyn(\mathbf{m})
  \;\triangleq\;
  Q^*\!\bigl(
    \hat{p}_s(\mathbf{m}),\,
    \hat{q}_d(\mathbf{m}),\,
    \hat{q}_g(\mathbf{m})
  \bigr),
  % \;\in\; [0,1],
  \label{eq:rdyn}
\end{equation}
the surrogate and its supervision signal share exactly one algebraic form. The three heads are trained jointly under a weighted sum of a positive-balanced binary cross-entropy on $\hat{p}_s$, an MSE on $\hat{q}_d$, and an MSE on $\hat{q}_g$ masked to failed rollouts. Full architecture and training details are in Appendix ~\ref{app:verifier}.

\textbf{Semantic alignment verifier.}
While dynamic realizability is essential, a motion is only successful if it also realizes the semantic intent of the text instruction. Following the T2M~\citep{humanml3d}, we train bidirectional GRU text and motion encoders, $\varphi_\text{text}$ and $\varphi_\text{motion}$, \emph{directly} on the target robot skeleton,
%---rather than on a human skeleton followed by retargeting---
so that the embedding space is consistent with the candidates produced by the \textsc{Text} stage. With $d_{ij}\!=\!\|\varphi_\text{text}(\ell_i)-\varphi_\text{motion}(\mathbf{m}_j)\|_2$, we optimize an all-pairs margin contrastive objective:
\begin{equation}
  \mathcal{L}_{\mathrm{match}}
  \;=\;
  \frac{1}{B}\sum_i d_{ii}^{2}
  \;+\;
  \frac{1}{B(B-1)}\sum_{i \neq j}
  \bigl[\delta-d_{ij}\bigr]_+^2,
  \qquad \delta=2.0,
  \label{eq:lmatch}
\end{equation}
where $[x]_+\!=\!\max(0,x)$ and $B$ is batch size. The all-pairs term exposes the encoders to $\mathcal{O}(B^2)$ informative negatives per step rather than the single random negative used in the original T2M formulation. At test time, the semantic score is defined as the exponentiated negative
embedding distance:
\begin{equation}
R_{\text{text}}(t, m) \triangleq
\exp\!\big(-\|\phi_{\text{text}}(t) - \phi_{\text{motion}}(m)\|_2\big) \in (0, 1], 
\end{equation} 
so a higher $\Rtext$ indicates better text--motion alignment.
% To avoid the circularity of evaluating a contrastive verifier with its own retrieval metric, we report VLM-as-Judge as the primary semantic metric in \cref{sec:experiments}. 

% and include R@1/R@3 only as a legacy comparison in \cref{app:r1_legacy}.

\subsection{Test-Time Selection \textsc{(Do)}}
\label{sec:do}

This stage converts the verifier scores into a single deployable
reference motion. Given a prompt $\ell$ and candidate motions
$\{\mathbf{m}_i\}_{i=1}^{N}$, we first filter candidates predicted to be
dynamically infeasible and then select the most semantically aligned survivor:
\begin{equation}
\begin{aligned}
  \hat{\mathcal{S}}
  &=
  \bigl\{
    \mathbf{m}_i :
    \Rdyn(\mathbf{m}_i) > \theta
  \bigr\},\\
  \mathbf{m}^*
  &=
  \begin{cases}
    \displaystyle
    \arg\max_{\mathbf{m}_i \in \hat{\mathcal{S}}}
    \Rtext(\ell,\mathbf{m}_i),
    & \hat{\mathcal{S}} \neq \varnothing, \\[6pt]
    \displaystyle
    \arg\max_i \Rdyn(\mathbf{m}_i),
    & \text{otherwise}.
  \end{cases}
\end{aligned}
  \label{eq:do_rule}
\end{equation}
The rule here is deliberate and reflects a deployment-safety priority:
on physical hardware, a semantically perfect but dynamically infeasible motion
can cause falls or actuator saturation, whereas a feasible but slightly
less-aligned motion merely underperforms. We therefore treat dynamic
feasibility as a hard constraint to be satisfied first, and semantic alignment
as the objective to optimize only within the executable pool. We set $\theta=0.8$ from validation roll-outs and the empty-set fallback $\hat{\mathcal{S}} =  \varnothing$ is triggered on only $1.18\%$ of test prompts.

\begin{figure*}[th]
  \centering
  \includegraphics[width=\linewidth]{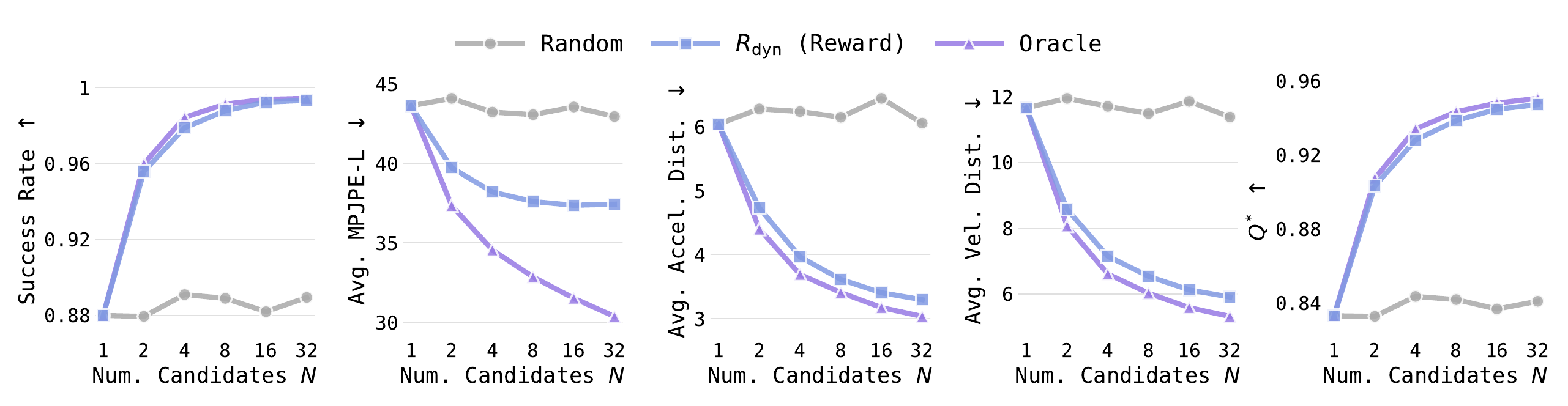}
  \caption{
    \textbf{Best-of-$N$ curves for dynamic feasibility on \basegen{}.}
    Subplots report Succ, $E_\text{mpjpe-l}$, $E_\text{acc}$, $E_\text{vel}$, and $Q^*$ as $N$ increases from $1$ to $32$.
    Across metrics, \rdynonly{} consistently outperforms \randomsel{} and closely approaches \oracle{}.
    % Motions selected by $\Rdyn$ are also comparable to \gtmotion{}, even slightly surpassing it in Succ and $Q^*$.
  }
  \label{fig:bon_main}
\end{figure*}

\section{Experiments}
\label{sec:experiments}

We evaluate \ourmethod{} as a deployable test-time scaling system for language-conditioned humanoid motion generation along three axes: test-time selection performance, zero-shot generalization, and real-world transfer.
\textbf{Q1.} Do the grounded verifiers and their composition select motions that are both executable by the whole-body controller and aligned with the language instruction (\Cref{sec:verifiers})?
\textbf{Q2.} Can the verifiers transfer plug-and-play to an unseen generator and out-of-distribution prompts without retraining? (\Cref{sec:gen})?
\textbf{Q3.} Do the simulation gains carry over to real-world execution on a Unitree~G1 humanoid (\Cref{sec:real_robot})?

\subsection{Experimental Setup}

\textbf{Robot set up and model training.} We use a Unitree~G1 humanoid as our robot platform with SONIC~\citep{sonic} as the frozen whole-body controller. We train our text-to-motion generator using AMASS~\citep{amass} and CLAW~\citep{cao2026clawcomposablelanguageannotatedwholebody}.
More implementation details can be found in Appendix~\ref{app:generator}.

\textbf{Baselines.}
We compare \oursc{} against four selection strategies: 
(i) \base{} ($N{=}1$), a single sample from the generator (no-test-time-scaling);
(ii) \rdynonly{}, $\arg\max_i\Rdyn(\mathbf{m}_i)$, a purely physical selector;
(iii) \rtextonly{}, $\arg\max_i\Rtext(\ell,\mathbf{m}_i)$, a purely semantic selector; and (iv) \oracle{}, a privileged selector that rolls out \emph{every} candidate in $\mathcal{M}_N(\ell)$ through SONIC and picks the one that actually scores best on each metric. Note that \oracle{} requires $N$ SONIC rollouts per prompt (three orders of magnitude more expensive than scoring with the verifiers) and is therefore impractical at deployment time; we report it only as an upper bound on any selector over the same pool. 
%The ground-truth reference motion \gtmotion{} is included as a target ceiling. \clnote{As a generation framework, what is ground-truth reference motion?}

\textbf{Evaluation metrics.}
We evaluate selected motions along two complementary dimensions.
\emph{Dynamic fidelity} measures whether a reference motion can be faithfully executed by the downstream tracker. We report tracking success rate (Succ), root-relative mean per-joint position error ($E_\text{mpjpe}$, mm), joint acceleration error ($E_\text{acc}$, mm/frame$^2$), joint velocity error ($E_\text{vel}$, mm/frame), and the composite oracle quality $Q^*$ defined in \Cref{eq:qstar}.
% \emph{Distributional quality} measures whether the selected motions remain natural and diverse; following~\citep{motionGPT}, we report FID and Diversity (Div)\trnote{Full def of FID?}
\emph{Semantic alignment} measures whether the selected motion matches the language instruction, using VLM-as-Judge as our primary evaluation. Full details of metric definitions and the VLM-based evaluation method are provided in the Appendix~\ref{app:evals}.

\textbf{Inference-time budget.}
All timings are measured on a single NVIDIA A100-SXM4 40GB GPU. Because the $N$
candidates for a prompt are sampled and scored as a single batch, the cost of test-time scaling grows sublinearly in $N$. Sampling  $32$ candidates from FSQ-GPT takes $2.62$\,s on average, and scoring 32 candidates adds only
$15.12$\,ms (Dynamic Feasibility Verifier) and $11.93$\,ms (Semantic Alignment Verifier) on average,
so a full $N{=}32$ sample and selection completes in roughly $3.5$\,s end-to-end. 

\begin{figure}[tbh]
  \centering
  \includegraphics[width=\linewidth]{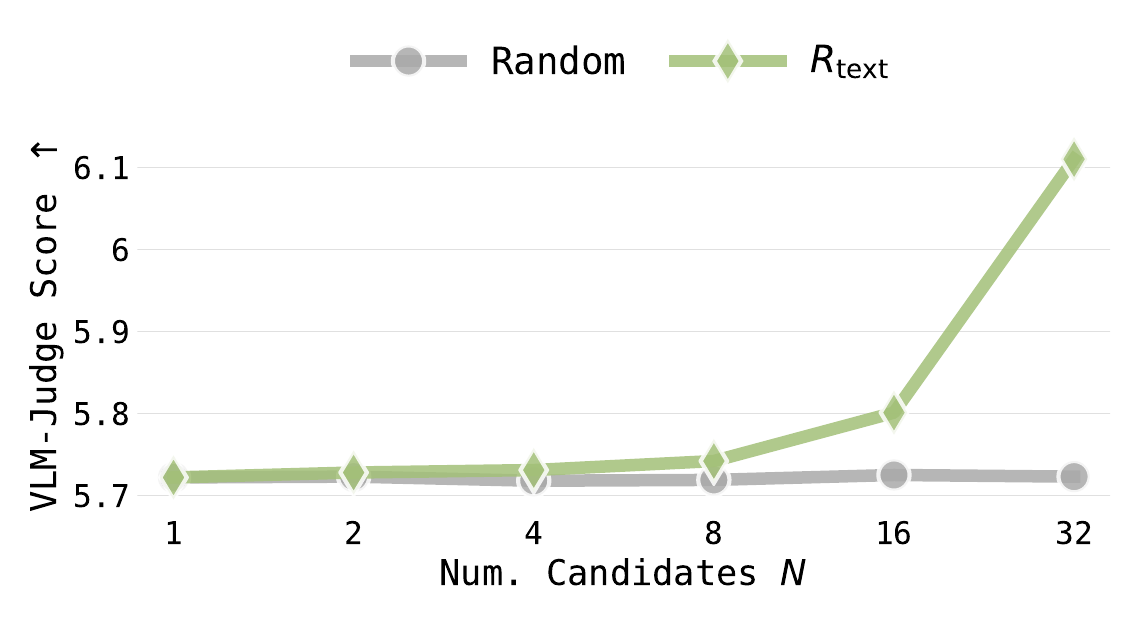}
  \caption{
    \textbf{Semantic alignment verifier turns larger candidate pools into better prompt alignment.}
    For each prompt, \rtextonly{} selects the candidate with the highest $\Rtext$ score; the selected motions improve consistently with increasing $N$ when evaluated by an independent VLM Judge.}
  \label{fig:bon_semantic}
\end{figure}

\begin{table*}[t]
  \centering
  % \vspace{-0.3cm}
  \caption{
    \textbf{\oursc{} balances the complementary strengths of $\Rdyn$ and $\Rtext$.} Pushing either verifier to its extreme costs the other, and \oursc{} achieves strong performance on both axes simultaneously.
}
  \label{tab:main_ablation}
  \normalsize
  %\resizebox{0.7\linewidth}{!}{%
  \begin{tabular}{l c ccccc}
    \toprule
    \multirow{2}{*}{\textbf{Strategy}}
      & \textbf{Semantic alignment}
      & \multicolumn{5}{c}{\textbf{Motion execution quality}} \\
    \cmidrule(lr){2-2}\cmidrule(lr){3-7}
      & VLM-Judge$\uparrow$
      & Succ$\uparrow$ & $E_\text{mpjpe}\downarrow$
      & $E_\text{acc}\downarrow$ & $E_\text{vel}\downarrow$ & $Q^*\uparrow$ \\
    \midrule
    \base{} ($N{=}1$)
      & 5.722 & $0.873$ & $44.34$ & $6.09$ & $11.82$ & $0.829$ \\
    \rdynonly{} ($N{=}32$)
      & 4.924
      & $\textbf{0.990}$ & $\textbf{38.15}$ & $\textbf{3.30}$ & $\textbf{5.91}$ & $\textbf{0.945}$ \\
    \rtextonly{} ($N{=}32$)
      & \textbf{6.110} & $0.885$ & $42.97$ & $4.90$ & $10.03$ & $0.847$ \\
    \oursc{} ($N{=}32$)
      & $\underline{6.054}$
      & $\underline{0.984}$ & $\underline{39.09}$ & $\underline{4.26}$ & $\underline{7.78}$ & $\underline{0.926}$ \\
    \bottomrule
  \end{tabular}
  %}
\end{table*}

\begin{table*}[tb]
  \centering
\caption{
\textbf{Zero-shot transfer to Kimodo.}
\oursc{}, trained only on \basegen{} rollouts, improves both VLM-Judge and physical execution metrics when applied zero-shot to the unseen Kimodo motion generator.
}
  \label{tab:gen_kimodo}
  % \resizebox{0.8\linewidth}{!}{%
  \normalsize
  \begin{tabular}{l c ccccc}
    \toprule
    \multirow{2}{*}{\textbf{Method}}
      & \textbf{Semantic alignment}
      & \multicolumn{5}{c}{\textbf{Motion execution quality}} \\
    \cmidrule(lr){2-2}\cmidrule(lr){3-7}
      & VLM-Judge$\uparrow$
      & Succ$\uparrow$ & $E_\text{mpjpe}\downarrow$
      & $E_\text{acc}\downarrow$ & $E_\text{vel}\downarrow$ & $Q^*\uparrow$ \\
    \midrule
    Kimodo (\base{}, $N{=}1$)
      & 4.823 & $0.937$ & $38.62$ & $1.84$ & $5.33$ & $0.918$ \\
    Kimodo+\rdynonly{} ($N{=}32$)
      & 5.020 & $\textbf{0.955}$ & $\textbf{35.47}$ & $\textbf{1.42}$ & $\textbf{4.28}$ & $\textbf{0.937}$ \\
    Kimodo+\rtextonly{}($N{=}32$)
      & \textbf{5.717} & $0.942$ & $38.02$ & $1.87$ & $5.43$ & $0.919$ \\
    Kimodo+\oursc{} ($N{=}32$)
      & \underline{5.381} & $\underline{0.954}$ & $\underline{37.49}$ & $\underline{1.70}$ & $\underline{4.97}$ & $\underline{0.935}$ \\
    \bottomrule
  \end{tabular}
  %}
\end{table*}

\subsection{Effectiveness Of \oursc{} For Test-Time Scaling}
\label{sec:verifiers}

% Before evaluating the complete \oursc{} selection rule, we first isolate the contribution of each verifier in a best-of-$N$ setting. Specifically, we examine whether increasing the candidate budget allows the Dynamic 
% Verifier and the Semantic Verifier to select motions with improved dynamic feasibility and semantic alignment, respectively.

\textbf{Effectiveness of individual verifiers.}
We first isolate the effect of each verifier under the same best-of-$N$ setting.
For each prompt, we sample a candidate pool of $N$ different samples and select one motion using either the Dynamic Feasibility 
Verifier or the Semantic Alignment Verifier alone.

\Cref{fig:bon_main} reports the dynamic feasibility metrics of the selected motions as the candidate budget $N$ increases, comparing \rdynonly{} against \randomsel{} and \oracle{}.
As $N$ increases, \rdynonly{} consistently outperforms \randomsel{} and approaches \oracle{} across physical metrics, showing that the learned verifier recovers much of the improvement available from oracle-based selection. This is important because \oracle{} requires rolling out every candidate with high-fidelity simulation, making it impractical as an online selection rule.

A similar trend holds for semantic alignment.
When selecting candidates by $\Rtext$, the VLM-as-Judge score (independent third-party judge) increases monotonically with the candidate budget, from $5.722$ at $N{=}1$ to $6.110$ at $N{=}32$ (\Cref{fig:bon_semantic}).

Together, these results show that the two verifiers convert additional test-time compute budgets into improvements along their intended axes: $\Rdyn$ improves executability, while $\Rtext$ improves semantic alignment.

\textbf{Effectiveness of \oursc{}}. We next evaluate the full selector, which composes the two verifiers through an asymmetric filter-then-rerank rule. Given a candidate pool, we first filter out motions predicted to be dynamically infeasible, and then selects the most semantically aligned motion among the remaining candidates.

Table~\ref{tab:main_ablation} compares \oursc{} against \base{} and two
single-verifier selectors (\rdynonly{} and \rtextonly{}) at $N{=}32$.
Optimizing feasibility alone is not neutral to semantics but detrimental:
\rdynonly{} drives the VLM-Judge score down to $4.924$, \emph{below} the
no-selection \base{} ($5.722$), as the most trackable candidates tend to be
conservative motions that drift from the prompt's intent. Symmetrically,
\rtextonly{} reaches $6.110$ in alignment but yields little improvement in
execution (Succ $0.885$ vs.\ \base{}'s $0.873$, far below \rdynonly{}'s $0.990$).
Since pushing either verifier to its extreme costs the other, \ourmethod{} resolves the tension by composition rather than compromise: it recovers execution close
to \rdynonly{} (Succ $0.984$ vs. $0.990$) while largely preserving the semantic alignment of
\rtextonly{} ($6.054$ vs.\ $6.110$).

\textbf{Qualitative comparison.}
Figure~\ref{fig:sim_qual} shows four representative case studies, illustrating different failure modes of the baselines and a successful case of \oursc{}.
\begin{figure*}[ht]
  \centering
  \includegraphics[width=\linewidth]{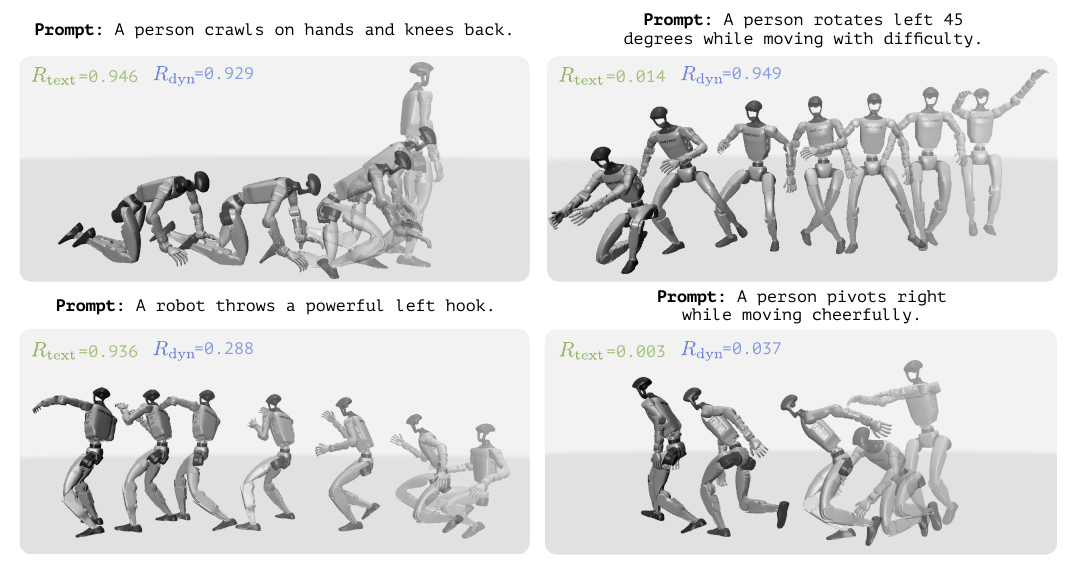}
  \caption{
 \textbf{Qualitative comparison.}
    Representative case studies in MuJoCo simulation.
    \textbf{Top-left:} a success case of \oursc{}, where the generated motion is dynamically feasible and semantically aligned.
    \textbf{Top-right:} a failure case of \rdynonly{} produces a stable but semantically generic motion.
    \textbf{Bottom-left:} a failure case of \rtextonly{} selects a semantically aligned but untrackable motion.
    \textbf{Bottom-right:} a failure case of \base{} fails with a dynamically infeasible and semantically inconsistent motion.
    }
  \label{fig:sim_qual}
\end{figure*}

\begin{table*}[thb]
  \centering
\caption{\textbf{Out-of-distribution generalization.} Evaluated on prompts from unseen dataset BONES-SEED~\citep{bonesstudio2026seed}, \oursc{} achieves a strong balance between dynamics fidelity and semantic alignment.}
  \label{tab:gen_ood}
  \normalsize
  %\resizebox{\linewidth}{!}{%
  \begin{tabular}{ll c c ccccc}
    \toprule
    \multirow{2}{*}{\textbf{Setting}} & \multirow{2}{*}{\textbf{Method}}
      & \textbf{Semantic alignment}
      & \multicolumn{5}{c}{\textbf{Motion execution quality}} \\
    \cmidrule(lr){3-3}\cmidrule(lr){4-8}
      & 
      & VLM-Judge$\uparrow$
      & Succ$\uparrow$
      & $E_\text{mpjpe}\downarrow$
      & $E_\text{acc}\downarrow$
      & $E_\text{vel}\downarrow$
      & $Q^*\uparrow$ \\
    \midrule
   \multirow{2}{*}{ID}
      & \base{}($N{=}1$)
      & 5.722 & 0.873 & 44.34 & 6.09 & 11.82 & 0.829 \\
      & \oursc{} ($N{=}32$)
      & \textbf{6.054} & \textbf{0.984} & \textbf{39.09} & \textbf{4.26} & \textbf{7.78} & \textbf{0.926} \\
    \midrule
    \multirow{2}{*}{OOD}
      & \base{}($N{=}1$)
      & 4.116 & 0.860 & 36.22 & 8.36 & 15.71 & 0.804 \\
      & \oursc{} ($N{=}32$)
      & \textbf{4.401} & \textbf{0.980} & \textbf{26.64} & \textbf{3.65} & \textbf{3.88} & \textbf{0.942} \\
    \bottomrule
  \end{tabular}
  %}
\end{table*}

\begin{figure*}[h]
  \centering
  \includegraphics[width=\linewidth]{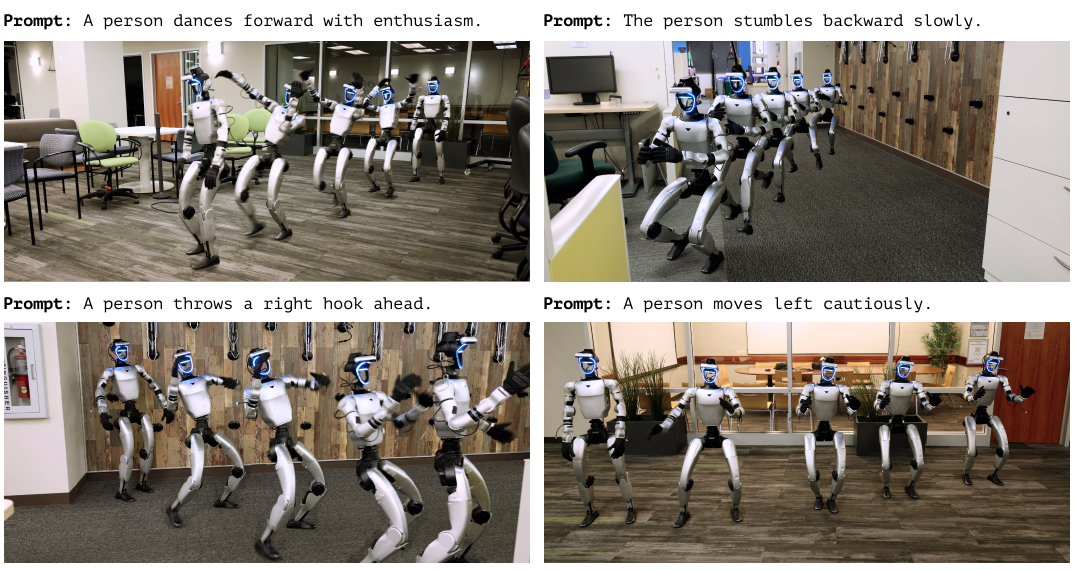}
  \caption{
\textbf{Real-robot execution results on the Unitree~G1.} Across diverse prompt types spanning locomotion, upper-body gestures, and their combinations, TEXEDO consistently selects motions that execute stably and faithfully realize the intended semantics on physical hardware.
    }
  \label{fig:real_robot}
\end{figure*}

\subsection{Out-Of-Distribution Generalization}
\label{sec:gen}

\textbf{Zero-shot transfer to an unseen motion generator.}
A key advantage of \oursc{} is that its verifiers operate on decoded robot motions rather than generator-specific internals.
The Dynamic Feasibility Verifier consumes only the candidate reference motion, and the Semantic Alignment Verifier consumes only the motion and the language instruction.
As a result, the selection rule is not tied to the architecture, tokenization scheme, or latent space of the generator used to produce the candidates.

To test this generator-agnostic property, we apply the same verifiers trained on \basegen{} rollouts directly to Kimodo~\citep{rempe2026kimodo}, an independently trained motion generator zero-shot.
As shown in \Cref{tab:gen_kimodo}, \oursc{} improves both execution quality and semantic alignment on this unseen generator, demonstrating that the learned criteria transfer beyond the generator distribution used to train the verifiers.

\textbf{Out-of-distribution (OOD) generalization.}
We further test whether \oursc{} remains effective on language instructions outside the training distribution.
The base generator is trained on the combined AMASS~\citep{amass} and CLAW~\citep{cao2026clawcomposablelanguageannotatedwholebody} datasets, whereas the OOD evaluation uses 50 prompts sampled from BONES-SEED~\citep{bonesstudio2026seed}.
This setting tests whether the verifier-based selection rule can improve generation when the prompt composition differs from the data used to train the generator and verifiers.
%
% As shown in \cref{tab:gen_ood}, \oursc{} improves over the base generator on both ID and OOD prompts.
% On the OOD split, \oursc{} increases tracking success from $0.860$ to $0.980$ and improves the oracle quality $Q^*$ from $0.804$ to $0.943$, while substantially reducing tracking errors.\jcnote{didn't mention semantic improvement and balance}
As shown in \Cref{tab:gen_ood}, \oursc{} improves over the base generator on both ID and OOD prompts along both axes. On the OOD split, it raises tracking success from $0.860$ to $0.980$ and oracle quality $Q^*$ from $0.804$ to $0.943$ and substantially reducing tracking errors. These results indicate that the feasibility verifier generalizes well beyond the training distribution. In contrast, the semantic gain is smaller than on ID prompts, likely because the learned semantic embedding transfers less readily to unseen prompts.

\subsection{Real-Robot Deployment}
\label{sec:real_robot}
% We deploy \oursc{} on a physical Unitree G1 humanoid across 30 text prompts spanning locomotion, upper-body gestures, and their combinations. Using N=32 candidates per prompt, \oursc{} achieves a tracking success rate of 30/30, compared to XX/30 for the base generator (N=1). Figure 5 shows qualitative results and full deployment details, including prompt construction, hardware setup, and per-prompt tracking results are provided in Appendix~\ref{app:additional_results}
We deploy \oursc on a physical Unitree G1 humanoid across 30 text prompts spanning locomotion, upper-body gestures, and their combinations. Using N=32 candidates per prompt, \oursc{} successfully executes all 30 prompts on hardware. \Cref{fig:real_robot} shows representative qualitative results: \oursc consistently selects motions that execute stably and match the intended behavior. Full deployment details including hardware setup, prompt construction, and per-prompt tracking results are provided in Appendix~\ref{app:robot}.
% \begin{table}[t]
%   \centering
%   \caption{ 
%     }
%   \label{tab:real_robot}
%   \begin{tabular}{l cccc}
%     \toprule
%     \textbf{Method}
%     & Succ$\uparrow$
%       & $E_\text{mpjpe}\downarrow$
%       & $E_\text{acc}\downarrow$
%       & $E_\text{vel}\downarrow$
%        \\
%     \midrule
%     \oursc{} ($N{=}32$)
%       & \textbf{xx.xx}
%       & \textbf{xx.xx}
%       & \textbf{xx.xx}
%       & \textbf{xx} \\
%     \bottomrule
%   \end{tabular}
% \end{table}

\section{Conclusion}
\label{sec:conclusion}

In this work, we presented \ourmethod{}, a test-time scaling framework that turns a frozen language-conditioned motion generator into a controller-aware humanoid motion system. By sampling multiple candidate motions and selecting among them with two complementary verifiers for dynamic feasibility and semantic alignment, \ourmethod{} improves the single reference ultimately commanded to the whole-body tracker without retraining either component.

Our evaluation shows that the verifiers are effective along their intended axes: the Dynamic Feasibility Verifier selects motions that are more executable by the downstream tracker, while the Semantic Alignment Verifier selects motions that better preserve the language instruction. The proposed filter-then-rerank rule combines these complementary signals to select motions that improve both execution quality and semantic alignment. The same verifiers transfer plug-and-play to an unseen generator and out-of-distribution prompts without retraining, and the resulting gains carry over to real-world execution on a Unitree~G1 humanoid. These findings demonstrate the effectiveness of the proposed method for improving deployable language-guided humanoid motion generation.

\section{Limitations}
\label{sec:limitations}

\textbf{Candidate-set and latency trade-off.}
As a test-time selection framework, \ourmethod{} is bounded by the quality and diversity of the sampled candidate motions. If the generator rarely produces motions that are both semantically aligned and dynamically feasible, or if the tracker is too weak to realize any motions, selection alone cannot recover a successful motion. Increasing the number of samples $N$ can improve the chance of finding a valid candidate, but this introduces an inference-time trade-off between selection quality and latency; future work could address this with adaptive sampling mechanisms that allocate additional samples only when needed. 
 
\textbf{Verifier data and tracker adaptation.}
The grounded verifiers need diverse reference motions and tracker roll-outs to learn reliable semantic and dynamic scores. Moreover, because dynamic feasibility is tracker-specific, changing the tracker requires rerunning roll-outs to update oracle feasibility labels and adapt the Dynamic Verifier.  However, the cost of this adaptation process can be effectively mitigated through a data reuse mechanism: the reference-motion corpus is not tied to a particular generator, and even under tracker changes the same references can be reused for relabeling, which is substantially less costly than collecting a new motion corpus or retraining the motion generator. Future work could further reduce this cost by investigating more efficient fine-tuning procedures for verifier adaptation.

\bibliographystyle{plainnat}
\bibliography{references}

\clearpage
\appendices
\section{Related Work}
\label{app:related_work}
% language-conditioned motion generator
\noindent\textbf{Language-conditioned motion generation.}
A prominent line of work represents continuous motion as discrete token sequences. 
MotionGPT \cite{motionGPT} and T2M-GPT \cite{t2mgpt} first learn a VQ-VAE-style motion tokenizer and then train an autoregressive language-model backbone to generate motion tokens from text. 
MoMask \cite{momask} follows the same discrete-token perspective, but uses an RVQ-VAE to construct hierarchical residual motion tokens and generates them through iterative masked prediction rather than left-to-right autoregressive decoding.
In parallel, diffusion-based methods such as MDM~\citep{mdm} and MotionDiffuse~\citep{motiondiffuse} formulate text-to-motion generation as an iterative denoising process, providing an alternative paradigm with strong motion diversity, controllability, and flexible conditioning.
More recently, Kimodo~\citep{rempe2026kimodo} scales kinematic motion diffusion with a large motion-text dataset and supports text prompting together with rich kinematic constraints such as keyframes, waypoints, joint constraints, and foot contacts.
While these works primarily focus on improving the quality and diversity of generated motions, our work addresses the deployment-time trade-off between semantic alignment and downstream tracker compatibility as a plug-in selection module for existing text-conditioned motion generators.

% cotraining generator with traker
\noindent\textbf{Aligning motion generation with tracking.}
To improve the executability of generated motions, many prior works align motion generation and tracking during training by using the same motion corpus for both the generator and the tracker. This encourages distributional alignment between generated references and the tracker's training data~\citep{sonic,tao2026heracles}. Some methods further fine-tune the tracker with RL under randomized commands, terrains, or environment configurations while keeping the generator fixed~\citep{zhang2026learning}. Such training-time alignment can improve executability, but it couples executability to how the generator and tracker are trained together. In contrast, our work keeps the generator and tracker fixed and introduces controller awareness through a lightweight runtime verifier, which can be adapted without retraining the motion generator.

\noindent\textbf{Test-time scaling of robot motion generation models.}
Large-scale pretraining has been the dominant route to capable generative models, but it is computationally expensive and ties each model's behavior to its training distribution. Recently, test-time scaling, generating multiple candidates and using a verifier to select among them, has emerged as a powerful complement in large language models~\citep{lightman2023lets,snell2024scaling,openai_o1,deepseekr1}. Its appeal is that it leverages the diversity already latent in a frozen model to adapt to deployment-time signals (e.g., human preference, task constraints, downstream executability) without retraining the model itself.
The same paradigm has recently begun to be applied to vision--language--action (VLA) policies~\citep{robomonkey,chen2025reimagination,wu2025foresight}, where additional inference-time samples paired with a learned verifier improve manipulation precision; this line of work, however, has so far targeted relatively simple embodiments and arm-only end-effector actions.
In this work, we bring the same generate-then-verify paradigm to language-conditioned humanoid whole-body motion generation, where it differs along three axes. The candidates are whole-body high-dimensional motions rather than just end-effector actions, so verification must operate in a much more complex action space.
Instead of a reward model trained only to evaluate the semantic or kinematic quality of the generated motion, as in recent text-to-motion test-time alignment methods such as ReAlign~\citep{realign}, whose step-aware reward scores candidate motions on semantic alignment and kinematic realism alone, we make the verifier controller-aware by distilling it from whole-body tracker rollouts. This is critical for humanoid whole-body motion: a kinematically plausible reference can still fall outside the tracker's executable envelope due to balance, contact, or actuation limits, so a motion-only reward cannot certify deployability, only a controller-grounded one can.

\section{Language-conditioned Motion Generator}
\label{app:generator}

This appendix details the \basegen{} generator $\mathcal{G}$ used in the \textsc{Text} stage (\Cref{sec:text}): the motion corpus and its preprocessing (\Cref{app:gen_data}), the FSQ tokenizer that discretizes motion (\Cref{app:gen_fsq}), and the language model that generates motion tokens from text (\Cref{app:gen_lm}).

\begin{table*}[h]
  \centering
  \caption{Sample counts of the combined motion corpus after preprocessing.}
  \label{tab:app_dataset_counts}
  \normalsize
  %\resizebox{0.9\columnwidth}{!}{%
  \begin{tabular}{lrrrr}
    \toprule
    \textbf{Corpus} & \textbf{Train} & \textbf{Val} & \textbf{Test} & \textbf{Total} \\
    \midrule
    AMASS (with HumanML3D text) &14148  & 1768  & 1768 & 17684 \\
    CLAW    & 11977 &1497  & 1497 & 14971  \\
    \midrule
    Combined      & 26125  & 3265 & 3265 & 32655 \\
    \bottomrule
  \end{tabular}
  %}
\end{table*}

\begin{table*}[h]
  \centering
  \caption{FSQ reconstruction loss weights $\lambda_c$.}
  \label{tab:app_fsq_loss}
  \normalsize
  % \resizebox{\columnwidth}{!}{%
  \begin{tabular}{lcccccc}
    \toprule
    root pos & root quat & joint pos & root vel & joint vel & root acc & joint acc \\
    \midrule
    $1.0$ & $1.0$ & $2.0$ & $1.0$ & $1.0$ & $0.5$ & $0.5$ \\
    \bottomrule
  \end{tabular}
  %}
\end{table*}

\subsection{Data Construction}
\label{app:gen_data}

\paragraph{Motion representation}
All motions are expressed in a $36$-dimensional Unitree~G1 representation per frame: root position $\mathbf{p}_\text{root}\in\mathbb{R}^3$, root quaternion (wxyz) $\mathbf{q}_\text{root}\in\mathbb{R}^4$, and $29$ joint positions $\boldsymbol{\theta}\in\mathbb{R}^{29}$, sampled at $50$\,Hz.

\paragraph{Corpora and splits}
The generator is trained on a combined corpus of AMASS~\citep{amass}  datasets retargeted to G1 (using HumanML3D captions as the text source) and the CLAW~\citep{cao2026clawcomposablelanguageannotatedwholebody} dataset. The merged dataset is split $8{:}1{:}1$ into train/val/test, and this split is fixed before training any downstream module so that the FSQ tokenizer, the language model, the Dynamic Feasibility Verifier, and the Semantic Alignment Verifier all share the same held-out test motions. Sample counts are reported in \Cref{tab:app_dataset_counts}. Each motion is paired with 2-5 language instructions, resulting in a total of 9,116 held-out test prompts.

\paragraph{Module-specific preprocessing}
Each downstream module normalizes the shared $36$-dim motion differently: the FSQ tokenizer applies per-channel standardization; the Dynamic Feasibility Verifier augments each frame with first- and second-order finite differences and applies z-score normalization with clipping (\Cref{app:dyn_verifier}); and the Semantic Alignment Verifier replaces the root $XY$ position with frame-to-frame velocity to obtain a global-position-invariant embedding (\Cref{app:sem_verifier}).

\subsection{FSQ Tokenizer}
\label{app:gen_fsq}

\paragraph{Architecture}
The tokenizer is a Finite Scalar Quantization (FSQ) motion VAE~\citep{fsq}. A $50$\,Hz, $36$-dim G1 motion is encoded by two stride-$2$ temporal layers (encoder width $512$, depth $3$, LayerNorm, ReLU) into a code sequence of length $L=T/4$. The FSQ quantizer uses the level vector $[3,3,3,3,3,2,2,2,2,2]$, yielding a fixed codebook of $K=\prod_i \ell_i = 7{,}776$ entries. We adopt FSQ over a learned VQ-VAE codebook to avoid codebook collapse on our comparatively small humanoid corpus, which we found important for stable training.

\paragraph{Reconstruction loss}
Inputs are per-channel standardized and supervised with a SmoothL1 reconstruction loss on the pose together with its first- and second-order temporal derivatives:
\begin{equation}
  \mathcal{L}_\text{fsq}
  = \sum_{c\in\mathcal{C}} \lambda_c\,
    \mathrm{SmoothL1}\!\left(\mathbf{m}_c,\hat{\mathbf{m}}_c\right),
\end{equation}
with per-term weights $\lambda_c$ given in \Cref{tab:app_fsq_loss}. Joint positions are weighted most heavily because they directly drive the downstream tracker.

\paragraph{Training}
Optimization and schedule hyperparameters are summarized in \Cref{tab:app_gen_config}; the tokenizer is trained over sliding windows of $100$ frames.

\subsection{Language Model}
\label{app:gen_lm}

\paragraph{Tokenized sequence-to-sequence interface}
We cast text-to-motion as a sequence-to-sequence problem over a shared vocabulary. The frozen FSQ tokenizer maps a continuous motion $\mathbf{m}\in\mathbb{R}^{T\times36}$ to a discrete code sequence $\mathbf{z}=(z_1,\dots,z_L)$ with $L=T/4$ and $z_\ell\in\{1,\dots,K\}$, $K=7{,}776$. We extend the Flan-T5-base~\citep{flant5} vocabulary with these $K$ codes together with three reserved special tokens marking the start, end, and padding of a motion sequence; the end-of-motion token lets the decoder determine the generated motion length at inference time. Text is truncated to $50$ sub-word tokens and motion to $[16,2048]$ frames, i.e.\ $[4,512]$ motion tokens after the tokenizer's $4\times$ temporal downsampling.

\paragraph{Architecture}
% The generator is the encoder--decoder Flan-T5-base ($\sim$220M parameters): the T5 encoder reads the instruction $\ell$ and the autoregressive decoder emits motion tokens conditioned on the encoder states. We initialize from the instruction-tuned Flan-T5 checkpoint rather than training a decoder-only model from scratch, which provides a strong language prior over the text side of the extended vocabulary.
We initialize from an instruction-tuned Flan-T5 ($\sim$220M parameters), whose encoder provides a strong language-understanding prior for conditioning on diverse instructions, while the decoder weights serve as a warm start for autoregressive generation over the extended motion-token vocabulary.

\paragraph{Training objective}
The model is fine-tuned on paired $(\ell,\mathbf{m})$ data with teacher forcing under the standard autoregressive cross-entropy over motion tokens,
\begin{equation}
  \mathcal{L}_\text{lm}
  = -\sum_{\ell=1}^{L}\log p_\theta\!\left(z_\ell \mid z_{<\ell},\,\mathrm{enc}(\ell)\right),
  \label{eq:app_lm_ce}
\end{equation}
where $\mathrm{enc}(\ell)$ denotes the T5 encoder states for the instruction. 

\paragraph{Sampling for candidate generation}
At inference, the $N$ candidates of the \textsc{Text} stage (\Cref{sec:text}) are drawn i.i.d.\ by ancestral sampling, $\mathbf{z}_i\sim p_\theta(\cdot\mid\ell)$, and each token sequence is decoded back to a $36$-dim trajectory by the frozen FSQ decoder. The diversity of the candidate pool is controlled by the sampling temperature $\tau$ together with top-$k$ / nucleus (top-$p$) truncation: a larger $\tau$ widens the feasible and semantic spread that the verifiers select from, at the cost of more low-quality samples. All training and sampling hyperparameters are summarized in \Cref{tab:app_gen_config}.

\begin{table*}[h]
  \centering
  \caption{Training and sampling configuration of the \basegen{} generator.}
  \label{tab:app_gen_config}
  \normalsize
  % \setlength{\tabcolsep}{4pt}
  %\resizebox{\columnwidth}{!}{%
  \begin{tabular}{lcc}
    \toprule
     & \textbf{FSQ Tokenizer} & \textbf{Language Model} \\
    \midrule
    Backbone / init        & temporal-conv VAE                 & Flan-T5-base (instr.-tuned) \\
    Optimizer              & AdamW                             & AdamW \\
    Learning rate          & $4\times10^{-5}$                  & $8\times10^{-4}$ \\
    $(\beta_1,\beta_2)$    & $(0.9,0.99)$                      & $(0.9,0.99)$ \\
    Weight decay           & $0$                               & $0$ \\
    LR schedule            & cosine                            & cosine \\
    Batch size             & $256$                             & $32$ \\
    Epochs                 & $100$                             & early-stop (val.) \\
    % Max.\ text length      & ---                               & $50$ tokens \\
    % Vocabulary addition    & ---                               & $K{+}3$ motion tokens \\
    Sampling               & ---                               & temp.\ $1.0$, top-$k{=}50$, top-$p{=}0.95$ \\
    \bottomrule
  \end{tabular}
  %}
\end{table*}

\begin{table*}[h]
  \centering
  \caption{Prediction accuracy of the three Dynamic Verifier heads on the held-out test set.}
  \label{tab:head_accuracy}
  \normalsize
  \begin{tabular}{l l r}
    \toprule
    \textbf{Head} & \textbf{Metric} & \textbf{Value} \\
    \midrule
    \multirow{3}{*}{$\hat{p}_s$ (success)}
      & AUROC$\uparrow$       & $0.979$ \\
      & AUPRC$\uparrow$       & $0.997$ \\
      & Fail recall$\uparrow$ & $0.930$ \\
    \midrule
    \multirow{2}{*}{$\hat{q}_d$ (dynamics)}
      & Spearman $\rho$ vs.\ $E_\text{acc}$$\downarrow$ & $-0.936$ \\
      & Spearman $\rho$ vs.\ $E_\text{vel}$$\downarrow$ & $-0.932$ \\
    \midrule
    \multirow{2}{*}{$\hat{q}_g$ (progress)}
      & Spearman $\rho$ vs.\ progress ratio$\uparrow$         & $0.238$ \\
      & Spearman $\rho$ vs.\ progress (on failures)$\uparrow$ & $0.925$ \\
    \bottomrule
  \end{tabular}
\end{table*}

\section{Verifier Training}
\label{app:verifier}

\subsection{Dynamic Feasibility Verifier}
\label{app:dyn_verifier}

\paragraph{SONIC roll-outs and oracle labels}
Oracle feasibility labels are collected by rolling out every reference motion offline through the SONIC whole-body tracker~\citep{sonic} in MuJoCo. Each episode is initialized from the first reference frame and advances until the clip ends or an early-termination condition fires. We adopt the height-based termination criteria of BeyondMimic~\citep{beyondmimic}; to avoid duplication, the precise thresholds, the success flag $y_s$, and the progress ratio $q_g$ are defined once in \Cref{app:evals} and reused verbatim here.
Each roll-out thus yields the three signals consumed by \Cref{eq:qstar}: the binary success flag $y_s$, the progress ratio $q_g$, and a normalized tracking quality
\begin{equation}
  q_d(\mathbf{m})
  = \tfrac{1}{2}\!\left[
      \operatorname{clip}\!\left(1 - \tfrac{e_\text{acc}}{e_\text{acc}^{95}},0,1\right)
    + \operatorname{clip}\!\left(1 - \tfrac{e_\text{vel}}{e_\text{vel}^{95}},0,1\right)
  \right],
\end{equation}
where $e_\text{acc}$ and $e_\text{vel}$ are the acceleration and velocity tracking errors and $e^{95}$ denotes their $95$th-percentile normalizers.

\paragraph{Ordering guarantee}
The constraint $\beta<1/(1+\alpha)$ makes $Q^*$ \emph{feasibility-first}: any successful roll-out scores above any failed one, regardless of other metrics.
\begin{proposition}[Feasibility-first ordering]
\label{prop:ordering}
For $\alpha,\beta>0$, $y_s\in\{0,1\}$ and $q_d,q_g\in[0,1]$, the quality $Q^*$ of \Cref{eq:qstar} satisfies $Q^*(1,\cdot,\cdot)>Q^*(0,\cdot,\cdot)$ for all arguments if and only if $\beta<\tfrac{1}{1+\alpha}$.
\end{proposition}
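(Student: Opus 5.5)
Since $y_s$ only takes the values $0$ and $1$, we can treat the two branches of \Cref{eq:qstar} separately. For $y_s=1$ we get $Q^*(1,q_d,q_g)=\frac{1+\alpha q_d}{1+\alpha}$. This depends only on $q_d$, is increasing in $q_d$ because $\alpha>0$, and so attains its infimum $\frac{1}{1+\alpha}$ at $q_d=0$. For $y_s=0$ we get $Q^*(0,q_d,q_g)=\beta q_g q_d$. This is increasing in both arguments because $\beta>0$, so its supremum over $[0,1]^2$ is $\beta$, attained at $q_d=q_g=1$. The statement ``$Q^*(1,\cdot,\cdot)>Q^*(0,\cdot,\cdot)$ for all arguments'' compares two independent argument tuples. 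It is therefore equivalent to $\inf Q^*(1,\cdot,\cdot)>\sup Q^*(0,\cdot,\cdot)$, provided both extremes are attained, which they are since the domain is compact. This reduces the claim to $\frac{1}{1+\alpha}>\beta$.

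\emph{Sufficiency.} Suppose $\beta<\frac{1}{1+\alpha}$. For any $(q_d,q_g)$ and $(q_d',q_g')$ in $[0,1]^2$ we have the chain
\[
Q^*(1,q_d,q_g)\ \ge\ \tfrac{1}{1+\alpha}\ >\ \beta\ \ge\ \beta q_g' q_d' = Q^*(0,q_d',q_g').
\]
This gives the strict inequality for every pair of argument tuples.

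\emph{Necessity.} We argue by contrapositive. Suppose $\beta\ge\frac{1}{1+\alpha}$. Choose the successful roll-out with $q_d=0$, which gives $Q^*=\frac{1}{1+\alpha}$, and the failed roll-out with $q_d'=q_g'=1$, which gives $Q^*=\beta\ge\frac{1}{1+\alpha}$. The strict ordering then fails for this pair.

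The only delicate point is the interpretation of the quantifier. The ordering must hold across independently chosen arguments for the two regimes, not with shared $(q_d,q_g)$; this is the reading matching ``any successful roll-out scores above any failed one.'' Under the shared-argument reading the condition would instead be $\frac{1+\alpha q_d}{1+\alpha}>\beta q_g q_d$ for all $(q_d,q_g)$. Taking $q_d=q_g=1$ there gives only $1>\beta$, so the ``only if'' direction would be false. I would therefore state the independent-argument reading explicitly at the start of the proof. After that, the argument is the elementary extremal computation above, using monotonicity and attainment of the bounds on the compact domain.
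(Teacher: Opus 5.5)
Your proof is correct and follows the same argument as the paper. Both bound the success branch below by $\tfrac{1}{1+\alpha}$ (attained at $q_d=0$) and the failure branch above by $\beta$ (attained at $q_d=q_g=1$), then compare the two endpoints; your explicit statement that the regimes are compared at independently chosen arguments is a useful clarification that the paper leaves implicit in its ``successful range lies above the failed range'' phrasing.
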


\begin{proof}
With $q_d,q_g\in[0,1]$, the two regimes of \Cref{eq:qstar} are bounded as
\begin{equation*}
  \begin{aligned}
  Q^*(1,q_d,q_g)
  &= \frac{1+\alpha q_d}{1+\alpha}
  \in\Bigl[\tfrac{1}{1+\alpha},\,1\Bigr],\\
  Q^*(0,q_d,q_g)
  &= \beta\,q_g\,q_d
  \in[0,\,\beta],
  \end{aligned}
\end{equation*}
and both lower/upper endpoints are attained ($q_d=0$ and $q_d=q_g=1$, respectively). The successful range therefore lies strictly above the failed range iff its lower end exceeds the failed upper end, i.e.\ $\tfrac{1}{1+\alpha}>\beta$.
\end{proof}

\noindent The same bounds show $Q^*$ stays graded \emph{within} each group: it increases in $q_d$ among successes and in $q_g q_d$ among failures, so the ``least-bad'' candidate is still preferred when all candidates fail (\Cref{tab:reward_ablation}).

\begin{table}[h]
  \centering
  \caption{\textbf{$\Rdyn$ ranks candidates in close agreement with the simulator oracle without ever running it.} Within-prompt Kendall $\tau$ between $\Rdyn$ and the oracle quality $Q^*$ on 9{,}116 held-out prompts.}
  \label{tab:rank_quality}
  \normalsize
  % \setlength{\tabcolsep}{8pt}
  % \resizebox{0.92\linewidth}{!}{%
  \begin{tabular}{lrr}
    \toprule
    \textbf{Prompt type} & \textbf{\#Prompts} & \textbf{Kendall $\tau\uparrow$} \\
    \midrule
    Mixed (success + failure) & 6{,}759 & $0.674$ \\
    All-failure               &      46 & $0.357$ \\
    All-success               & 2{,}311 & $0.608$ \\
    \midrule
    Overall                   & 9{,}116 & $\textbf{0.656}$ \\
    \bottomrule
  \end{tabular}
  % }
\end{table}

\begin{table*}[h]
  \centering
  \caption{Reward-design ablation at $N{=}8$. Model weights are fixed; only the inference-time reward formula varies. $^\dagger$mean progress on all-failure prompts.}
  \label{tab:reward_ablation}
  \normalsize
  % \setlength{\tabcolsep}{8pt}
  % \resizebox{0.92\linewidth}{!}{%
    \begin{tabular}{clccc}
      \toprule
      & \textbf{Reward formula}
        & \textbf{Succ$\uparrow$}
        & $\bar{Q}^*\uparrow$
        & \textbf{All-fail prog.$^\dagger\uparrow$} \\
      \midrule
      (a) & $\hat{p}_s$ only                 & $\textbf{0.984}$ & $0.922$ & $0.395$ \\
      (b) & $\hat{q}_d$ only                 & $0.953$ & $0.916$ & $0.416$ \\
      (c) & $\hat{q}_g$ only                 & $0.982$ & $0.921$ & $\textbf{0.427}$ \\
      (d) & $\hat{p}_s \cdot \hat{q}_d$      & $0.981$ & $\textbf{0.931}$ & $0.411$ \\
      (e) & $\hat{p}_s \cdot \hat{q}_g$      & $\textbf{0.984}$ & $0.922$ & $0.395$ \\
      (f) & $Q^*$ formula, \Cref{eq:rdyn} (ours) & $\underline{0.983}$ & $\textbf{0.931}$ & $\underline{0.425}$ \\
      \bottomrule
    \end{tabular}
  % }
\end{table*}
\paragraph{Model structure and training strategy}
Each raw $36$-dim frame is expanded into a $94$-dim feature (root dynamics $7$, joint positions $29$, velocities $29$, accelerations $29$), z-score normalized and clipped to $[-10,10]$. An input projection encodes each semantic group to $128$-dim and fuses them into a $256$-dim token; a $4$-layer causal Transformer ($d_{\text{model}}=256$, $4$ heads, pre-LayerNorm) followed by mean-attention pooling and three lightweight MLP heads outputs $(\hat{p}_s,\hat{q}_d,\hat{q}_g)$, which are recombined through \Cref{eq:rdyn} with $\alpha=0.4,\ \beta=0.6$.
The three heads are trained jointly by minimizing, over each batch $\mathcal{B}$,
\begin{equation}
\begin{aligned}
  \mathcal{L}
  &= \underbrace{\mathrm{BCE}_{w^+}\!\bigl(\hat{p}_s,\,y_s\bigr)}_{\text{success}}
  \;+\; \lambda_d\,\underbrace{\bigl(\hat{q}_d-q_d\bigr)^2}_{\text{dynamics}}
  \\[-1pt]
  &\quad+\; \lambda_g\,\underbrace{\frac{\sum_{i\in\mathcal{B}}(1-y_s^i)\bigl(\hat{q}_g^i-q_g^i\bigr)^2}{\sum_{i\in\mathcal{B}}(1-y_s^i)}}_{\substack{\text{progress}\\\text{(failed roll-outs only)}}},
\end{aligned}
  \label{eq:verifier_loss}
\end{equation}
where $\lambda_d=0.6$, $\lambda_g=0.8$, $\mathrm{BCE}_{w^+}$ is a binary cross-entropy with positive-class weight $w^+$ to offset class imbalance, and the first two terms are averaged over $\mathcal{B}$. The factor $(1-y_s)$ masks the progress loss to \emph{failed} roll-outs: a success completes the reference, so $q_g\!\equiv\!1$ is a constant copy of $y_s$ that carries no gradient, whereas on failures $q_g$ varies and is exactly where accurate progress estimates are needed for partial-credit ranking, hence the head is weak on progress overall but strong on failures (\Cref{tab:head_accuracy}). 
\paragraph{Fidelity studies}
We validate $\Rdyn$ on held-out data along two axes: per-head prediction accuracy (\Cref{tab:head_accuracy}) and end-to-end ranking agreement with the oracle (\Cref{tab:rank_quality}).
The success head $\hat{p}_s$ is highly discriminative (AUROC $0.979$, AUPRC $0.997$) and maintains $0.930$ recall on the minority failure class, so infeasible candidates are rarely mislabeled as feasible. The dynamics head $\hat{q}_d$ is strongly rank-correlated with both tracking-error signals (Spearman $\rho=-0.936$ and $-0.932$ against acceleration and velocity errors, negative by construction). The progress head $\hat{q}_g$ has a weak global correlation with the progress ratio ($\rho=0.238$) because most successful roll-outs cluster near $q_g\!=\!1$, but becomes highly informative on the failed roll-outs where progress genuinely varies ($\rho=0.925$), precisely the regime where partial-credit ranking is needed.
At the composite level, the within-prompt Kendall $\tau$ between $\Rdyn$ and $Q^*$ over $9{,}116$ held-out prompts is $0.656$ overall, rising to $0.674$ on \emph{mixed} prompts containing both feasible and infeasible candidates (\Cref{tab:rank_quality}), confirming that $\Rdyn$ recovers the bulk of the oracle ordering without invoking the tracker.

\paragraph{Ablation studies}
We ablate the two design choices that turn the three heads into a single selection reward: the functional \emph{form} of the recombination, and the \emph{weights} it places on each term.
\Cref{tab:reward_ablation} varies the inference-time reward formula at $N{=}8$ with the model weights held fixed. Using $\hat{p}_s$ alone yields the highest raw success rate but collapses to near-random ordering once every candidate in a prompt fails, because it provides no signal to separate equally-infeasible motions; conversely, $\hat{q}_d$ alone breaks the success-first hierarchy and lets smooth-but-failing candidates win. The full $Q^*$ formula of \Cref{eq:rdyn} strikes the best balance: it matches the strongest variants on global quality ($\bar{Q}^*=0.931$) while retaining a high all-failure progress score ($0.425$), so it still selects the ``least bad'' candidate when no motion succeeds.

\subsection{Semantic Alignment Verifier}
\label{app:sem_verifier}

\paragraph{Model structure and training strategy}
The semantic verifier follows the T2M co-embedding design~\citep{humanml3d} but is trained \emph{directly} on the G1 skeleton so that its embedding space is consistent with the candidates produced by the \textsc{Text} stage. The motion encoder applies two temporal Conv1d down-sampling layers (a MovementConvEncoder) followed by a BiGRU to produce a $512$-dim motion embedding $\varphi_\text{motion}(\mathbf{m})$; the text encoder sums $300$-dim word and $15$-dim part-of-speech embeddings and feeds them through a BiGRU to obtain $\varphi_\text{text}(\ell)$. The ConvEncoder is first pretrained as a motion autoencoder and frozen, after which the two BiGRU encoders are trained jointly with the all-pairs margin contrastive objective in \Cref{eq:lmatch} ($\delta=2.0$). The test-time score $\Rtext$ is the exponentiated negative embedding distance.

\paragraph{Fidelity studies}
We verify that the contrastive embedding behind $\Rtext$ has actually learned a separating geometry. On the $9{,}116$ held-out pairs and a $32$-distractor pool, $\Rtext$ attains R@1 $=0.747$ and R@3 $=0.935$ in motion-to-text retrieval, far above the $1/32$ random baseline (\Cref{tab:rtext_sanity}); shuffling the motion input collapses retrieval to near-random, confirming that the score captures text--motion correspondence rather than a marginal motion prior.

\begin{table}[h]
  \centering
  \caption{\textbf{$\Rtext$ separates paired text and motion.} Retrieval sanity check on the held-out test set with a $32$-distractor pool. Shuffling the motion input collapses every metric to near-random.}
  \label{tab:rtext_sanity}
  \normalsize
  % \setlength{\tabcolsep}{8pt}
  % \resizebox{0.92\linewidth}{!}{%
  \begin{tabular}{l cc}
    \toprule
    \textbf{Metric} & \textbf{Paired (test)} & \textbf{Shuffled (control)} \\
    \midrule
    R@1$\uparrow$                     & $\textbf{0.747}$ & $0.014$ \\
    R@2$\uparrow$                     & $\textbf{0.885}$ & $0.059$ \\
    R@3$\uparrow$                     & $\textbf{0.935}$ & $0.102$ \\
    Matching score$\downarrow$        & $\textbf{0.248}$ & $2.977$ \\
    Gap (neg$-$pos)$\uparrow$         & $\textbf{2.835}$ & $0.019$ \\
    \bottomrule
  \end{tabular}
  % }
\end{table}

% \begin{table}[h]
%   \centering
%   \caption{Semantic verifier fidelity: agreement between $\Rtext$ ranking and VLM-as-Judge.}
%   \label{tab:app_sem_fidelity}
%   \small
%   \begin{tabular}{lc}
%     \toprule
%     \textbf{Metric} & \textbf{Value} \\
%     \midrule
%     Within-prompt Kendall $\tau$ ($\Rtext$ vs.\ VLM-Judge)$\uparrow$ & \jcnote{X.XX} \\
%     Retrieval R@1 / R@3 (legacy)$\uparrow$ & \jcnote{X.XX} / \jcnote{X.XX} \\
%     \bottomrule
%   \end{tabular}
% \end{table}

\section{Real-World Results}
\paragraph{Hardware setup and motion collection}
All real-world experiments are conducted on a Unitree~G1 humanoid robot using the SONIC whole-body tracking policy~\citep{sonic}.  We evaluate $30$ text prompts spanning locomotion, upper-body gestures, and their combinations. These motions are used to assess whether the references selected by \oursc{} can be executed reliably on physical hardware.
\paragraph{Results}
\oursc{} successfully executes all 30 real-world trajectories on the physical robot. \Cref{fig:real_robot} in the main text shows representative execution sequences, and \Cref{tab:real_world_tracking_metrics} reports the per-prompt real-world tracking metrics.
\label{app:robot}

\begin{table*}[t]
  \centering
  \caption{Per-prompt real-world deployment metrics. A checkmark indicates that the robot completes the trajectory without falling or early termination. Tracking errors are reported in mm, mm/frame, and mm/frame$^2$, respectively; the overall row reports frame-weighted averages.}
  \label{tab:real_world_tracking_metrics}
  \small
  \setlength{\tabcolsep}{3pt}
  \begin{tabular}{p{0.7\textwidth}cccc}
    \toprule
    \textbf{Prompt} & \textbf{Success} & $E_{\text{mpjpe-l}}\downarrow$ & $E_{\text{vel}}\downarrow$ & $E_{\text{accel}}\downarrow$ \\
    \midrule
    the man is doing salsa dance & \checkmark & 39.14 & 5.60 & 3.60 \\
    the person took a big step over something. & \checkmark & 26.18 & 2.84 & 1.90 \\
    a person walks in a circle clockwise. & \checkmark & 23.11 & 3.13 & 2.42 \\
    A person appears happy while throwing punches and squatting forward. & \checkmark & 47.38 & 4.68 & 2.59 \\
    a person walks forward slowly. & \checkmark & 20.04 & 1.97 & 1.35 \\
    The person holds an object and moves forward. & \checkmark & 29.35 & 3.43 & 1.72 \\
    A person is carrying something heavy while moving forward. & \checkmark & 42.22 & 3.64 & 2.21 \\
    A person throws a right hook ahead. & \checkmark & 40.76 & 5.46 & 3.08 \\
    A person dances happily ahead. & \checkmark & 32.18 & 4.90 & 2.83 \\
    a person walks straight forward & \checkmark & 17.93 & 2.12 & 1.55 \\
    A person executes a left jab forward with force and then throws a left hook ahead with full force. & \checkmark & 46.58 & 6.27 & 3.40 \\
    A person moves cautiously right. & \checkmark & 21.70 & 2.33 & 1.44 \\
    The person moves while boxing and walking forward. & \checkmark & 32.67 & 4.26 & 2.51 \\
    A person jabs right forward and performs a happy dance. & \checkmark & 41.41 & 4.30 & 2.50 \\
    A person advances while boxing ahead aggressively. & \checkmark & 38.68 & 4.31 & 2.20 \\
    the person is stretching, with their arms above their head. & \checkmark & 34.59 & 0.90 & 0.66 \\
    A person carries an object sideways to the right balancing carefully. & \checkmark & 43.68 & 2.99 & 1.93 \\
    A person throws a left hook ahead with full force. & \checkmark & 41.04 & 6.70 & 4.00 \\
    spinning in a circle in the air. & \checkmark & 48.91 & 11.41 & 11.09 \\
    A person shambles left lifelessly. & \checkmark & 41.55 & 4.66 & 2.60 \\
    a man is doing jumping jacks. & \checkmark & 33.68 & 6.12 & 4.15 \\
    a person walking forward aggressively & \checkmark & 21.24 & 2.37 & 1.92 \\
    A person dances joyfully forward with enthusiasm. & \checkmark & 30.10 & 4.76 & 2.75 \\
    A person pivots right while moving cheerfully. & \checkmark & 49.25 & 5.98 & 3.38 \\
    A person rotates right while sneaking. & \checkmark & 33.77 & 4.77 & 2.56 \\
    A person moves while crouching rearward low to the ground. & \checkmark & 58.24 & 4.49 & 2.08 \\
    A person bounces along sideways to the left cheerfully and then scurries away rearward frantically. & \checkmark & 39.62 & 2.75 & 1.71 \\
    A person scurries away sideways to the left in a panic. & \checkmark & 36.96 & 3.83 & 2.32 \\
    a man raises his right hand to his head and then returns it to his side & \checkmark & 23.47 & 1.82 & 1.43 \\
    The person stumbles backward in a slow, zombie-like shamble. & \checkmark & 36.30 & 3.36 & 2.09 \\
    \midrule
    \textbf{Overall} & \textbf{30/30} & \textbf{37.41} & \textbf{4.35} & \textbf{2.67} \\
    \bottomrule
  \end{tabular}
\end{table*}

% real-robot prompt
% real robot results

% \section{Metrics}
% \label{app:evals}

% We report two families of metrics. Physical metrics measure how well the deployment-time tracker executes the selected motion; the semantic metric measures whether the motion realizes the text instruction.

% \paragraph{Physical metrics.}
% All physical metrics aggregate offline SONIC roll-outs under the early-termination criterion of \Cref{app:dyn_verifier}.
% \begin{itemize}[leftmargin=1.4em,itemsep=1pt,topsep=2pt]
%   \item \textbf{Success} (Succ$\uparrow$): fraction of roll-outs that reach the end of the reference without triggering any early-termination condition.
%   \item $E_\text{mpjpe-l}$ ($\downarrow$, mm): root-relative (pelvis-anchored) mean per-joint position error, averaged over bodies.
%   \item $E_\text{accel}$ (accel-dist, $\downarrow$, mm/frame$^2$): error of the second-order finite difference of body positions, averaged over bodies.
%   \item $E_\text{vel}$ (vel-dist, $\downarrow$, mm/frame): error of the first-order finite difference of body positions, averaged over bodies.
%   \item $Q^*$ ($\uparrow$): the oracle quality of \Cref{eq:qstar} combining success, tracking quality, and progress into a single scalar.
% \end{itemize}
% Following the convention in the experiments, $E_\text{mpjpe-l}$, $E_\text{accel}$, and $E_\text{vel}$ are computed over successful roll-outs only, so that tracking error is not confounded with early termination.

\section{Metrics}
\label{app:evals}

We report two families of metrics. Physical metrics measure how well the deployment-time tracker executes the selected motion; the semantic metric measures whether the motion realizes the text instruction. 

\paragraph{Notation}
A roll-out tracks a reference of $T$ frames over $J$ rigid bodies. At frame $t$ let $\mathbf{p}^{\text{ref},j}_t,\mathbf{p}^{\text{rob},j}_t\in\mathbb{R}^3$ be the world position of body $j$ in the reference and in the realized robot state, and let $\mathbf{q}^{\text{ref},0}_t,\mathbf{q}^{\text{rob},0}_t$ be the orientation of the anchor body (pelvis, $j{=}0$). We write $[\cdot]_z$ for the vertical component and $\mathbf{R}(\mathbf{q})\in\mathrm{SO}(3)$ for the rotation matrix of a quaternion.

\paragraph{Termination criteria}
Following BeyondMimic~\citep{beyondmimic}, the episode is terminated at the first frame $t$ at which any of the following \emph{height-based} conditions fires:
\begin{align}
  \text{(anchor height)}\quad
    & \bigl|[\mathbf{p}^{\text{ref},0}_t]_z-[\mathbf{p}^{\text{rob},0}_t]_z\bigr| > h_a, \label{eq:term_anchor_pos}\\
  \text{(anchor tilt)}\quad
    & \Bigl|\bigl[\mathbf{R}(\mathbf{q}^{\text{ref},0}_t)^{\!\top}\mathbf{g}\bigr]_z-\bigl[\mathbf{R}(\mathbf{q}^{\text{rob},0}_t)^{\!\top}\mathbf{g}\bigr]_z\Bigr| > c_o, \label{eq:term_anchor_ori}\\
  \text{(end-effector height)}\quad
    & \max_{b\in\mathcal{B}_\text{ee}}\bigl|[\tilde{\mathbf{p}}^{\text{ref},b}_t]_z-[\tilde{\mathbf{p}}^{\text{rob},b}_t]_z\bigr| > h_e, \label{eq:term_ee}
\end{align}
where $\mathbf{g}=(0,0,-1)^{\!\top}$ is the gravity direction, $\mathcal{B}_\text{ee}$ is the set of end-effector bodies, $\tilde{\mathbf{p}}$ denotes anchor-relative body positions, and $(h_a,c_o,h_e)=(0.25,0.8,0.25)$. Let $\tau$ be the first frame at which \Cref{eq:term_anchor_pos,eq:term_anchor_ori,eq:term_ee} fire, or $\tau=T$ if the roll-out reaches the end of the reference. \Cref{eq:term_anchor_ori} compares the $z$-component of gravity projected into the reference and robot anchor frames, i.e.\ the deviation in anchor tilt.
\paragraph{Success and progress}
\begin{equation}
  \text{Succ} \;=\; \mathbb{1}[\tau = T],
  \qquad
  q_g \;=\; \frac{\tau}{T} \in (0,1],
\end{equation}
so a roll-out is \textbf{successful} ($\uparrow$) only if it reaches the final reference frame without any termination, and the progress ratio $q_g$ is the fraction of the reference completed before termination.

\paragraph{Tracking errors}
With the position finite differences $\mathbf{v}^{\cdot,j}_t=\mathbf{p}^{\cdot,j}_t-\mathbf{p}^{\cdot,j}_{t-1}$ and $\mathbf{a}^{\cdot,j}_t=\mathbf{v}^{\cdot,j}_t-\mathbf{v}^{\cdot,j}_{t-1}$, and the anchor-removed positions $\tilde{\mathbf{p}}^{\cdot,j}_t=\mathbf{p}^{\cdot,j}_t-\mathbf{p}^{\cdot,0}_t$, the tracking errors (in mm, mm/frame, and mm/frame$^2$) are
\begin{align}
  E_\text{mpjpe-l} &= \frac{10^3}{T}\sum_{t=1}^{T}\frac{1}{J}\sum_{j=1}^{J}
    \bigl\|\tilde{\mathbf{p}}^{\text{ref},j}_t-\tilde{\mathbf{p}}^{\text{rob},j}_t\bigr\|_2, \label{eq:mpjpe}\\
  E_\text{vel} &= \frac{10^3}{T-1}\sum_{t=2}^{T}\frac{1}{J}\sum_{j=1}^{J}
    \bigl\|\mathbf{v}^{\text{ref},j}_t-\mathbf{v}^{\text{rob},j}_t\bigr\|_2, \label{eq:vel}\\
  E_\text{accel} &= \frac{10^3}{T-2}\sum_{t=3}^{T}\frac{1}{J}\sum_{j=1}^{J}
    \bigl\|\mathbf{a}^{\text{ref},j}_t-\mathbf{a}^{\text{rob},j}_t\bigr\|_2. \label{eq:accel}
\end{align}
$E_\text{mpjpe-l}$ ($\downarrow$) is the root-relative (pelvis-anchored) mean per-body position error; $E_\text{vel}$ ($\downarrow$) and $E_\text{accel}$ ($\downarrow$) penalize first- and second-order kinematic mismatch and the factor $10^3$ converts m to mm. As in the main experiments, $E_\text{mpjpe-l}$, $E_\text{vel}$, and $E_\text{accel}$ are averaged over the executed frames of \emph{successful} roll-outs only, so that tracking error is not confounded with early termination. Finally, $Q^*$ ($\uparrow$, \Cref{eq:qstar}) collapses success, tracking quality, and progress into the single oracle quality used as the Dynamic Verifier supervision target.

\paragraph{Semantic metric}

% %
% The ensemble varies along two axes. First, we query two VLMs from different families---GPT-5.5 and \texttt{gemini-2.5-flash}---and average their scores, so that the metric does not hinge on a single model's idiosyncrasies. Second, each model scores the pair under a $2{\times}2$ grid of \emph{rubric} and \emph{frame count}. The two rubrics are a \emph{holistic} rubric, which tags each semantic unit (ACTION, BODY-PART, SPATIAL, TEMPORAL, ATTRIBUTE) as MATCH / PARTIAL / MISMATCH and maps the matched fraction to a $1$--$10$ band, and a \emph{per-axis} rubric, which scores the same five axes on a $\{0,1,2\}$ scale and sums them. Each rubric is run at $8$ and $16$ uniformly-sampled frames; the denser sampling catches temporal failures (repetition count, pace, trajectory reversal) that are aliased at $8$ frames. Both rubrics share an \emph{agent-equivalence} clause (the humanoid robot stands in for any human referent in the prompt) and \emph{floor caps} that prevent a single critical failure---e.g., a wrong primary action or opposite-direction translation---from being masked by matched filler. The reported semantic score is the mean over all eight calls (two models $\times$ two rubrics $\times$ two frame counts). Full prompts are in \Cref{app:eval_prompts}.
% \paragraph{Semantic metric.}
To obtain an independent, scalable measure of how well a rendered rollout realizes its text prompt, we adopt a VLM-as-judge ensemble that scores each (text, rendered-rollout video) pair with several low-temperature VLM calls and reports their mean as a single score in $[1,10]$.
The ensemble varies along two main axes to ensure robustness. First, to mitigate the risk of the metric inheriting the idiosyncratic biases of any single model, we query two distinct VLMs from different families, namely GPT-5.5 and \texttt{gemini-2.5-flash}, and average their outputs. Second, each model evaluates the video under a $2 \times 2$ grid combining two evaluation rubrics and two frame counts. The first rubric is a holistic rubric, which tags five key semantic units (ACTION, BODY-PART, SPATIAL, TEMPORAL, ATTRIBUTE) as MATCH, PARTIAL, or MISMATCH, mapping the total fraction of successful matches to a 1--10 scale. The second is a per-axis rubric, which scores the same five dimensions individually on a $\{0,1,2\}$ scale and sums them. Both rubrics are evaluated at both 8 and 16 uniformly-sampled frames. The denser 16-frame sampling is crucial for catching temporal failures, such as incorrect repetition counts, improper pacing, or trajectory reversals, that might be aliased or missed at 8 frames. Additionally, both rubrics incorporate an agent-equivalence clause, which dictates that the humanoid robot stands in for any human referent in the prompt, as well as strict floor caps. These floor caps ensure that a single critical failure, such as an incorrect primary action or an opposite-direction translation, cannot be masked or inflated by matching minor filler details. The final reported semantic score is the mean across all eight configuration calls (2 models $\times$ 2 rubrics $\times$ 2 frame counts). The full prompts and rubrics are detailed in \Cref{app:eval_prompts}.

\section{VLM-Judge prompts}\label{app:eval_prompts}
This appendix reproduces, verbatim, the two VLM-judge rubrics used by the ensemble of the previous section. Each rubric is issued to both models (GPT-5.5 and \texttt{gemini-2.5-flash}) at two temporal resolutions ($8$ and $16$ uniformly-sampled frames) and at temperature~$0$, giving the eight calls whose mean is the reported semantic score.

\subsection{Holistic Rubric}\label{app:p_holistic}

\begin{promptbox}
# AGENT EQUIVALENCE -- read first
The video shows a humanoid robot performing a motion. For all evaluation purposes, treat the robot as a stand-in for any human referent in the text ("a person", "the man", "they", "this person"). Robot identity is NEVER a reason to label a BODY unit as MISMATCH; only assess whether the referenced body part is correctly involved.

# YOUR ROLE
You are a strict evaluator for text-conditioned robot motion generation. Your job is to find failure modes -- do not be charitable to the generator. Judge only on observable evidence in the video. If evidence is missing or ambiguous, do NOT default to MATCH.

# INPUT
- Text: a natural language description of the target motion.
- Video: either a native mp4, or N uniformly-sampled frames in temporal order. **Treat the gaps between samples as continuous motion you did not directly observe.** A momentary airborne phase between adjacent frames is normal for walking and is NOT evidence against "walking".

# THREE-STEP PROTOCOL

## Step 1 -- Semantic Decomposition

Parse the text into atomic semantic units. **Compound verbs MUST be decomposed into component primitives.** Examples:
- "crouch-walks forward" -> ACTION 'crouch posture' + ACTION 'walking gait' + SPATIAL 'forward'
- "throws a right hook" -> ACTION 'throw' + BODY 'right arm' + ACTION 'hook (curved trajectory)'
- "steps backwards" -> ACTION 'step' + SPATIAL 'backward'
- "carries an object back and then strides sideways to the right" -> ACTION 'carry object' + SPATIAL 'back' + TEMPORAL 'then' + ACTION 'stride' + SPATIAL 'sideways right'

Categories (each unit MUST get exactly one):
- ACTION: a verb or movement primitive ("walk", "throw", "wave", "jab", "swim")
- BODY: a body part or end-effector ("right arm", "both hands"). Skip BODY units that just restate the agent -- do NOT create BODY='a person' / BODY='the man'.
- SPATIAL: a direction, trajectory, or target ("forward", "sideways right", "above head", "in front of them")
- TEMPORAL: sequencing or duration ("first ... then", "after a moment", "several times", "quickly", "then stops")
- ATTRIBUTE: stylistic quality, magnitude, or speed ("happily", "powerfully", "slightly", "joyfully"). If a word describes how a unit is executed rather than what is executed, it's an ATTRIBUTE.

Do NOT create empty filler units (no ACTION='doing', no BODY='person'). If a piece of text adds no new constraint, skip it. Aim for 2-6 units for simple prompts, up to 8 for compound prompts. Mark the FIRST ACTION unit you extract as the **primary ACTION** -- it's the one whose failure caps the score.

## Step 2 -- Per-Unit Verification

For each unit assign one label. Cite frame numbers (or timestamps if a native video) as evidence.
- MATCH: clearly satisfied with concrete visible evidence.
- PARTIAL: roughly satisfied but with a noticeable deviation. Examples: text says "quickly" but motion is normal speed; text says "small steps" but steps look normal-sized; text says "forward" and net translation is forward but minimal.
- MISMATCH: absent, contradicted, or violated. **If the unit's realization cannot be verified from the video, label MISMATCH -- absence of evidence is NOT evidence of MATCH.**

Be strict. If you find yourself hedging ("seems to roughly resemble..."), the label is at best PARTIAL.

## Step 3 -- Holistic Scoring (DETERMINISTIC)

The final score is derived from Step 2 labels -- do not pick a "vibe" number.

**a) Floor rules (apply first, take the lowest applicable cap):**
- If the video shows no motion related to the text at all -> score = 1.
- If the primary ACTION unit is MISMATCH -> score <= 3.
- If a SPATIAL unit is MISMATCH because the motion is in the opposite direction (e.g., text says "forward" but motion is backward) -> score <= 3.
- If >= 50% of units are MISMATCH -> score <= 3.
- If any unit is MISMATCH and any other unit is MISMATCH or PARTIAL -> score <= 7.
- If any unit is PARTIAL (and no MISMATCH) -> score <= 8.

**b) Candidate score from label fraction:**
Let M = number of MATCH, P = number of PARTIAL, X = number of MISMATCH, N = M+P+X.
Compute f = (M + 0.5*P) / N. Then:
- f >= 0.95 -> candidate in {9, 10}
- 0.75 <= f < 0.95 -> candidate in {7, 8}
- 0.50 <= f < 0.75 -> candidate in {5, 6}
- 0.25 <= f < 0.50 -> candidate in {3, 4}
- f < 0.25 -> candidate in {1, 2}

Inside each band, choose the higher integer if the matched units include the primary ACTION and no deviation feels severe; otherwise the lower one.

**c) Final score = min(floor_cap, candidate).**

You may freely assign 2, 4, 7, 8, 9 -- they are valid scores. Do not collapse to round numbers like 1/3/5/10.

# OUTPUT -- JSON only, no markdown fences, no commentary before or after

Output a single valid JSON object with exactly the fields shown below. Use real integers and real strings -- do NOT include angle-bracket placeholders like `<int>` in your output.

Worked example for a hypothetical prompt "walk forward slowly" where the robot walks forward but at normal speed:
{
  "semantic_units": [
    {"id": 1, "category": "ACTION", "content": "walk", "label": "MATCH", "rationale": "Robot performs a bipedal walking gait with alternating steps from frame 2 through frame 8."},
    {"id": 2, "category": "SPATIAL", "content": "forward", "label": "MATCH", "rationale": "Robot translates forward across the floor pattern; visible position change between frame 1 and frame 8."},
    {"id": 3, "category": "ATTRIBUTE", "content": "slowly", "label": "PARTIAL", "rationale": "Pace looks like normal walking, not slow."}
  ],
  "score": 7,
  "justification": "Primary ACTION 'walk' MATCH and SPATIAL MATCH, but ATTRIBUTE 'slowly' is PARTIAL. f = (2 + 0.5*1)/3 = 0.83, band {7,8}. Floor cap from one PARTIAL: <= 8. Final: 7, downgraded inside the band because the speed deviation is not minor."
}

Produce exactly that JSON shape (same keys, same nesting). Each unit id is an integer starting at 1 and incrementing. Score is an integer 1..10. No other top-level keys.
\end{promptbox}

\subsection{Per-Axis Rubric}\label{app:p_axis}

\begin{promptbox}
# AGENT EQUIVALENCE
The video shows a humanoid robot. Treat the robot as a stand-in for any "person" / "man" / "they" referenced in the text. Robot identity is never a reason to penalize.

# YOUR ROLE
You are a strict evaluator for text-conditioned robot motion. Judge only on visible evidence in the frames. Treat unsampled gaps between frames as continuous motion you didn't observe -- do NOT use those gaps as evidence of failure.

# INPUT
- Text: a description of the target motion.
- Video: a native mp4, or N uniformly-sampled frames in temporal order.

# FIVE-AXIS PROTOCOL

For each axis below, assign an integer score in {0, 1, 2}:

**Axis A -- PRIMARY ACTION**: the main verb of the text (walk, jab, wave, swim, crawl, dance, ...).
  - 2 = robot clearly performs the primary action.
  - 1 = robot does something resembling the primary action with significant deviation, OR performs an action that overlaps partially.
  - 0 = robot does something unrelated, or no discernible motion.

**Axis B -- BODY PART CORRECTNESS**: did the correct body part(s) execute the action? (e.g., "right arm" jabbed, "both hands" waved.)
  - 2 = exactly the body parts named in the text.
  - 1 = wrong-side limb, OR additional body parts also involved beyond what was specified.
  - 0 = action executed by a completely wrong body part, OR the named body part is idle while another acts.
  - If the text does not specify a body part, score 2 by default.

**Axis C -- SPATIAL CORRECTNESS**: direction, trajectory, target ("forward", "sideways right", "in front of them", "back and then to the side").
  - 2 = all spatial constraints satisfied.
  - 1 = some satisfied, some violated; OR direction approximately right but small/limited.
  - 0 = opposite or unrelated direction. If the text says "forward" and motion is backward, this is 0.
  - If the text has no spatial component, score 2 by default.

**Axis D -- TEMPORAL / COUNT / SEQUENCING**: ordering ("first ... then"), duration ("after a moment"), repetition count ("several times"), pace ("quickly").
  - 2 = repetition count and sequencing match clearly (cite frame indices showing each repetition).
  - 1 = motion type right but count / duration / order partially off (e.g., text says "several times" but motion only repeats once or twice; or text says "then stops" but motion continues).
  - 0 = no temporal/count component satisfied at all.
  - If the text has no temporal/count component, score 2 by default.

**Axis E -- ATTRIBUTE / STYLE**: speed, magnitude, intensity, mood ("slowly", "powerfully", "happily", "with force", "slightly").
  - 2 = the named quality is clearly observable.
  - 1 = partially evident (e.g., "powerfully" -- motion is fast but not visibly forceful; "joyfully" -- motion is dynamic but no clear celebratory quality).
  - 0 = the named quality is absent or contradicted.
  - If the text has no attribute, score 2 by default.

# SCORE MAPPING (DETERMINISTIC)
Let S = A + B + C + D + E (sum, range 0..10).

Final score = max(1, S). However, apply these floor caps first:
- If A = 0 -> final score <= 3.
- If C = 0 AND text has a spatial component -> final score <= 3.
- If A <= 1 -> final score <= 7.

# OUTPUT -- JSON only

Worked example for text "A person waves both hands at their side several times.":
{
  "axes": {
    "A_primary_action": {"score": 2, "rationale": "Robot raises both hands and oscillates them between frames 2-8, consistent with waving."},
    "B_body_part":      {"score": 2, "rationale": "Both hands are clearly involved (text required 'both hands')."},
    "C_spatial":        {"score": 1, "rationale": "Hands move at chest/face level rather than 'at their side'; minor spatial deviation."},
    "D_temporal":       {"score": 1, "rationale": "Only ~2 visible waves in the clip; text says 'several times' which implies more than 2."},
    "E_attribute":      {"score": 2, "rationale": "No attribute in text -- default 2."}
  },
  "sum": 8,
  "floor_cap_applied": "none",
  "score": 8,
  "justification": "Action and body parts correct; minor spatial and count deviations bring this to 8."
}

Produce a JSON object with exactly the same shape. Use real integers 0/1/2 for axis scores. Score is integer 1..10. No markdown fences, no commentary.
\end{promptbox}

\end{document}